\definecolor{blue}{rgb}{0.0, 0.0, 0.3}
\definecolor{red}{rgb}{0.3, 0.0, 0.0}
\definecolor{green}{rgb}{0.0, 0.3, 0.0}
\newcommand{\tr}{\operatorname{Trace}}
\newcommand{\trace}[1]{\normalfont\textrm{Tr}\left(#1\right)}
\newcommand{\diag}[1]{\operatorname{diag}\left(#1\right)}
\newcommand{\rank}[1]{\operatorname{rank}\left(#1\right)}
\newcommand{\infim}[2]{\underset{#1}{\textrm{inf}} \ #2}
\newcommand{\minim}[2]{\underset{#1}{\textrm{min}} \ #2}
\newcommand{\maxim}[2]{\underset{#1}{\textrm{max}} \ #2}
\newcommand{\x}{{\normalfont\textrm x}}
\newcommand{\y}{{\normalfont\textrm y}}
\renewcommand{\v}{{\textrm v}}
\newcommand{\e}{{\textrm e}}
\renewcommand{\u}{{\textrm u}}
\newcommand{\0}{{\textrm 0}}
\newcommand{\1}{\textrm 1}
\newcommand{\w}{{\normalfont\textrm w}}
\renewcommand{\b}{{\textrm b}}
\renewcommand{\d}{{\textrm d}}
\newcommand{\I}{\textrm{I}}
\newcommand{\A}{\textrm{A}}
\newcommand{\B}{\textrm{B}}
\newcommand{\M}{\normalfont\textrm{M}}
\newcommand{\W}{{\normalfont\textrm{W}}}
\newcommand{\C}{{\normalfont\textrm{C}}}
\newcommand{\U}{\textrm{U}}
\newcommand{\V}{\textrm{V}}
\renewcommand{\P}{\textrm{P}}
\newcommand{\X}{{\normalfont\textrm{X}}}
\newcommand{\Y}{{\normalfont\textrm{Y}}}
\newcommand{\cD}{\mathcal{D}}
\newcommand{\cE}{\mathcal{E}}
\newcommand{\cL}{\mathcal{L}}
\newcommand{\cN}{\mathcal{N}}
\newcommand{\cX}{\mathcal{X}}
\newcommand{\cY}{\mathcal{Y}}
\newcommand{\cS}{\mathcal{S}}
\newcommand{\bb}{\mathbb}
\newcommand{\R}{\bb R}
\newcommand{\E}{\bb E}
\newcommand{\bE}{\bb E}
\newtheorem{theorem}{Theorem}[section]
\newtheorem{lemma}[theorem]{Lemma}
\newtheorem{proposition}[theorem]{Proposition}
\newtheorem{definition}[theorem]{Definition}
\newtheorem{remark}[theorem]{Remark}
\newtheorem{prop}[theorem]{Proposition}
\newtheorem{corollary}[theorem]{Corollary}
\icmltitlerunning{Dropout and Nuclear Norm Regularization}
\begin{document}

\twocolumn[
\icmltitle{On Dropout and Nuclear Norm Regularization}



\icmlsetsymbol{equal}{*}

\begin{icmlauthorlist}
\icmlauthor{Poorya Mianjy}{csjhu}
\icmlauthor{Raman Arora}{csjhu}
\end{icmlauthorlist}

\icmlaffiliation{csjhu}{Department of Computer Science, Johns Hopkins University, Baltimore, MD, USA}

\icmlcorrespondingauthor{Raman Arora}{arora@cs.jhu.edu}

\icmlkeywords{Deep Learning, Dropout, Inductive Bias, Nuclear Norm}

\vskip 0.3in
]



\printAffiliationsAndNotice{}  

\begin{abstract}
We give a formal and complete characterization of the explicit regularizer induced by dropout in deep linear networks with squared loss. We show that (a) the explicit regularizer is composed of an $\ell_2$-path regularizer and other terms that are also re-scaling invariant, (b) the convex envelope of the induced regularizer is the squared nuclear norm of the network map, and (c) for a sufficiently large dropout rate, we characterize the global optima of the dropout objective. We validate our theoretical findings with empirical results.
\end{abstract}
\section{Introduction}\label{sec:intro}

Deep learning is revolutionizing the technological world with recent advances in artificial intelligence. However, a formal understanding of when or why deep learning algorithms succeed has remained elusive. Recently, a series of works focusing on computational learning theoretic aspects of deep learning have implicated inductive biases due to various algorithmic choices to be a crucial potential explanation~\cite{zhang2016understanding,gunasekar2018characterizing,neyshabur2014search,martin2018implicit,mianjy2018implicit}. Here, we examine such an implicit bias of dropout in deep linear networks. 

Dropout is a popular algorithmic approach that helps training deep neural networks that generalize better~\cite{hinton2012improving,srivastava2014dropout}. Inspired by the reproduction model in the evolution of advanced organisms, dropout training aims at breaking co-adaptation among neurons by dropping them independently and identically according to a Bernoulli random variable.

Here, we restrict ourselves to simpler networks; we consider multi-layered feedforward networks with linear activations~\cite{goodfellow2016deep}. While the overall function is linear, the representation in factored form makes the optimization landscape non-convex and hence, challenging to analyze. More importantly, we argue that the fact we choose to represent a linear map in a factored form has important implications to the learning problem, akin in many ways to the implicit bias due to stochastic optimization algorithms and various algorithmic heuristics used in deep learning~\cite{gunasekar2017implicit,li2018algorithmic,azizan2019stochastic}. 

Several recent works have investigated the optimization landscape properties of deep linear networks~\cite{baldi1989neural,saxe2013exact,kawaguchi2016deep,hardt2016identity,laurent2018deep}, as well as the implicit bias due to first-order optimization algorithms for training such networks~\cite{gunasekar2018implicit,ji2018gradient}, and the convergence rates of such algorithms~\cite{bartlett2018gradient,arora2018convergence}. The focus here is to have a similar development for dropout when training deep linear networks.

\subsection{Notation}\label{sec:notation}
For an integer $i$, $[i]$ represents the set $\{ 1,\ldots,i \}$, $\e_i$ denotes the $i$-th standard basis, and $\1_i \in \R^{i}$ is the vector of all ones. The set of all $k$-combinations of a set $\cS$ is denoted by $\cS \choose k$. We denote linear operators and vectors by Roman capital and lowercase letters, respectively, e.g. $\Y$ and $\y$. Scalar variables are denoted by lower case letters (e.g. $y$) and sets by script letters, e.g. $\cY$. We denote the $\ell_2$ norm of a vector $\x$ by $\| \x\|$. For a matrix $\X$, $\| \X \|_F$ denotes the Frobenius norm, $\| \X\|_*$ denotes the nuclear norm, and $\sigma_i(\X)$ is the $i$-th largest singular value of matrix $\X$. For $\X\in \R^{d_2\times d_1}$ and a positive definite matrix $\C\in\R^{d_1\times d_1}$, $\| \X \|_\C^2 := \trace{\X\C\X^\top}$. The standard inner product between two matrices $\X,\X'$, is denoted by $\langle \X, \X' \rangle := \trace{\X^\top \X'}$. We denote the $i$-th column and the $j$-th row of a matrix $\X$ with vectors $\x_{:i}$ and $\x_{j:}$, both in column form. The vector of diagonal elements of $\X$ is denoted as $\diag{\X}$. The diagonal matrix with diagonal entries as the elements of a vector $\x$ is denoted as $\diag{\x}$. With a slight abuse of notation, we use $\{ \W_i \}$ as a shorthand for the tuple $(\W_1, \ldots, \W_{k+1})$. 

\subsection{Problem Setup}\label{sec:pre}
We consider the hypotheses class of multilayer feed-forward linear networks with input dimension $d_0$, output dimension $d_{k+1}$, $k$ hidden layers of widths $d_1, \ldots, d_k,$ and linear transformations $\W_i \in \R^{d_i \times d_{i-1}}$, for $i = 1, \ldots, k+1:$
$$\cL_{\{d_i\}}=\{ g:\x \mapsto \W_{k+1}\cdots \W_{1}\x, \ \W_i \in \R^{d_{i} \times d_{i-1}}\}.$$ 
We refer to the set of $k+1$ integers $\{ d_i \}_{i=0}^{k+1}$ specifying the width of each layer as the \emph{architecture} of the function class, the set of the weight parameters $\{\W_i\}_{i=1}^{k+1}$ as an \emph{implementation}, or an element of the function class, and $\W_{k+1\to 1}:=\W_{k+1}\W_k\cdots\W_1$ as the \emph{network map}.

\renewcommand{\arraystretch}{1.5}
\begin{table}
\caption{Key terms, corresponding symbols, and descriptions.}
\small
\begin{center}
\begin{tabular}{| c | c | c |} \hline
{\bf Term} & {\bf Symbol} &  {\bf Description} \\ \hline
architecture & $\{ d_i \}$ &  $(d_0,\ldots,d_{k+1})$ \\ \hline
implementation & $\{ \W_i \}$ & $(\W_0,\ldots,\W_{k+1})$ \\ \hline
network map & $\W_{k+1\to 1}$ & $\W_{k+1}\W_k\cdots \W_1$ \\ \hline
population risk & $L(\{ \W_i \})$ &  $\bE_{\x,\y}[\| \y - \W_{k+1 \to 1}\x\|^2]$ \\ \hline
dropout objective & $L_\theta(\{ \W_i \})$ &  see Equation~\eqref{eq:dropout_obj} \\ \hline
explicit regularizer & $R(\{ \W_i \})$ & {$ L_\theta(\{ \W_i \})\!-\!L(\{ \W_i \})$} \\ \hline
induced regularizer & $\Theta(\M)$ &  see Equation~\eqref{eq:induced_reg} \\ \hline
\end{tabular}
\end{center}
\label{tab:notation}
\end{table}

The focus here is on \emph{deep regression} with dropout under $\ell_2$ loss, which is widely used in computer vision tasks, including human pose estimation~\cite{toshev2014deeppose}, facial landmark detection, and age estimation~\cite{lathuiliere2019comprehensive}. More formally, we study the following learning problem for deep linear networks. Let $\cX \subseteq \R^{d_0}$ and $\cY \subseteq \R^{d_{k+1}}$ denote the input feature space and the output label space, respectively. Let $\cD$ denote the joint probability distribution on $\cX \times \cY.$ We assume that $\bE[\x\x^\top]$ has full rank. Given a training set $\{ \x_i,\y_i \}_{i=1}^n$ drawn i.i.d. from the distribution $\cD$, the goal of the learning problem is to minimize the \emph{population risk} under the squared loss $L(\{\W_{i}\}):=\bE_{\x,\y}[\| \y - \W_{k+1 \to 1}\x\|^2]$. Note that the population risk $L$ depends only on the network map and not the specific implementations of it, i.e. $L(\{ \W_i \}) = L(\{ \W'_i \})$ for all $\W_{k+1}\cdots\W_1 = \W'_{k+1}\cdots\W'_1$. For that reason, with a slight abuse of notation we write $L(\W_{k+1 \to 1}):=L(\{ \W_i \})$.

 Dropout is an iterative procedure wherein at each iteration each node in the network is dropped independently and identically according to a Bernoulli random variable with parameter $\theta$. Equivalently, we can view dropout, algorithmically, as an instance of stochastic gradient descent for minimizing the following objective over ${\{ \W_i \}}$:  
\begin{equation}\label{eq:dropout_obj}
{L_\theta(\{\W_{i}\})} := {\bE}_{(\x,\y,\{\b_i\})} [\| \y - \bar\W_{k+1\to 1} \x \|^2], 
\end{equation}
where $\bar\W_{i \to j} := \frac{1}{\theta^k}\W_{i} \B_{i-1}\W_{i-1} \cdots \B_j\W_{j}$, and  $\B_l=\diag{[\b_l(1), \ldots, \b_l(d_l)]}$ represents the dropout pattern in the $l^\textrm{th}$ layer with Bernoulli random variables on the diagonal; if $\B_l(i,i)=0$ then all paths from the input to the output that pass through the $i^\textrm{th}$ hidden node in the $l^\textrm{th}$ layer are turned ``off'', i.e., those paths have no contribution in determining the output of the network for that instance of the dropout pattern; we refer to the parameter $1-\theta$ as the \emph{dropout rate}. $\bar\W_{i\to j}$ is an unbiased estimator of $\W_{i\to j}$, i.e. $\E_{\{\b_i \}}[\bar\W_{i\to j}] = \W_{i \to j}$.

We say that the dropout algorithm \emph{succeeds} in training a network if it returns a map $\W_{k+1\to 1}$ that (approximately) minimizes $L_\theta$. In this paper, the central question under investigation is to understand \emph{which network maps/architectures is a successful dropout training biased towards.}

To answer this question, we begin with the following simple observation that
\begin{align*}
\displaystyle \!L_\theta(\{ \W_i \}) \! =\! L(\{ \W_i \})\! +\! {\bE}_{(\x, \{\b_i\})} \!\| \W_{k+1\to 1}\x - \bar\W_{k+1\to 1}\x \|^2
\end{align*} 
In other words, the dropout objective is composed of the population risk $L(\{ \W_i \})$ plus an \emph{explicit regularizer} $R(\{ \W_i \}):=\bE_{(\x,\y,\{\b_i\})} [\| \W_{k+1\to 1}\x - \bar\W_{k+1\to 1}\x \|^2]$ induced by dropout. Denoting the second moment of $\x$ by $\C:=\bE[\x\x^\top]$, we note that $R(\{\W_i \})=\bE_{\{\b_i\}} [\| \W_{k+1\to 1} - \bar\W_{k+1\to 1} \|_\C^2]$. Since any stochastic network map specified by $\bar\W_{k+1\to 1}$ is an unbiased estimator of the network map specified by $\W_{k+1\to 1}$, the explicit regularizer captures the variance of the network implemented by the weights $\{  \W_i \}$ under Bernoulli perturbations. By minimizing this variance term, dropout training aims at \emph{breaking co-adaptation between hidden nodes} -- it biases towards networks whose random sub-networks yield similar outputs~\cite{srivastava2014dropout}.

If $\{ \W^*_i \}$ is an infimum of~\eqref{eq:dropout_obj}, then it minimizes the explicit regularizer among all implementations of the network map, $\M=\W^*_{k+1} \cdots \W^*_1$, i.e., 
$R(\{ \W^*_i \}) = \infim{\W_{k+1}\cdots\W_1 = \M}{R(\{ \W_i \})}$.
We refer to the infimum of the explicit regularizer over all implementations of a given network map $\M$ as the \emph{induced regularizer}:
\begin{align}\label{eq:induced_reg}
\Theta(\M) := \infim{\W_{k+1}\cdots\W_1 = \M}{R(\{ \W_i \})}
\end{align}
The domain of the induced regularizer $\Theta$ is the linear maps implementable by the network, i.e., the set $\{ \M: \rank{\M} \leq \min_{i}{d_i} \}$. Since the infimum of the induced regularizer is always attained (see Lemma~\ref{lem:spectral} in the appendix), we can equivalently study the following problem to understand the solution to Problem~\ref{eq:dropout_obj} in terms of the network map:
\begin{align}\label{eq:optim}
\minim{\M}{L(\M) + \Theta(\M)}, \quad \rank{\M} \leq \minim{i\in [k+1]}{d_i}. 
\end{align}

To characterize which networks are preferable by dropout training, one needs to understand the explicit regularizer $R$, understand the induced regularizer $\Theta$, and explore the global minima of Problem~\ref{eq:optim}. In this regard, we make several important contributions summarized as follows.
\begin{enumerate}
\item We derive the closed form expression for the explicit regularizer $R(\{ \W_i \})$ induced by dropout training in deep linear networks. The explicit regularizer is comprised of the $\ell_2$-path regularizer as well as other rescaling invariant sub-regularizers.
\item We show that the convex envelope of the induced regularizer is proportional to the squared nuclear norm of the network map, generalizing a similar result for matrix factorization~\cite{Cavazza2017analysis} and single hidden layer linear networks~\cite{mianjy2018implicit}. Furthermore, we show that the induced regularizer equals its convex envelope if and only if the network is \emph{equalized}, a notion that quantitatively measures \emph{co-adaptation} between hidden nodes~\cite{mianjy2018implicit}. 
 
\item We completely characterize the global minima of the dropout objective $L_{\theta}$ in Problem~\ref{eq:dropout_obj} despite the objective being non-convex, under a simple eigengap condition (see Theorem~\ref{thm:main_1d}). This gap condition depends on the model, the data distribution, the network architecture and the dropout rate, and is always satisfied by deep linear network architectures with one output neuron. 
\item We empirically verify our theoretical findings. 
\end{enumerate}

The rest of the paper is organized as follows. In Section~\ref{sec:main_results}, we present the main results of the paper. In Section~\ref{sec:proofs}, we discuss the proof ideas and the key insights. Section~\ref{sec:exp} details the experimental results and Section~\ref{sec:disc} concludes with a discussion of future work. We refer the reader to Table~\ref{tab:notation} for a quick reference to the most useful notation. 
\section{Main Results}\label{sec:main_results}
\subsection{The explicit regularizer}\label{sec:induced_reg}

\tikzset{
  every neuron/.style={
    circle,
    draw,
    minimum size=.5cm
  },
  neuron missing/.style={
    draw=none, 
    scale=2,
    text height=0.333cm,
    execute at begin node=\color{black}$\vdots$
  },
}

\begin{figure*}[ht!]
\begin{tikzpicture}[x=1.5cm, y=1.5cm, >=stealth]

\foreach \m/\l [count=\y] in {1,2,missing,3}
  \node [every neuron/.try, neuron \m/.try] (input-\m) at (0,2-\y) {};

\foreach \m/\l [count=\y] in {1,2,3,missing,4}
  \node [every neuron/.try, neuron \m/.try] (hidden1-\m) at (1,2.5-\y) {};

\foreach \m/\l [count=\y] in {1,2,3,missing,4}
  \node [every neuron/.try, neuron \m/.try] (hidden2-\m) at (2,2.5-\y) {};

\foreach \m/\l [count=\y] in {1,2,3,missing,4}
  \node [every neuron/.try, neuron \m/.try] (hidden3-\m) at (3,2.5-\y) {};

\foreach \m/\l [count=\y] in {1,2,3,missing,4}
  \node [every neuron/.try, neuron \m/.try] (hidden4-\m) at (4,2.5-\y) {};

\foreach \m/\l [count=\y] in {1,2,3,missing,4}
  \node [every neuron/.try, neuron \m/.try] (hidden5-\m) at (5,2.5-\y) {};

\foreach \m/\l [count=\y] in {1,2,3,missing,4}
  \node [every neuron/.try, neuron \m/.try] (hidden6-\m) at (6,2.5-\y) {};

\foreach \m/\l [count=\y] in {1,2,3,missing,4}
  \node [every neuron/.try, neuron \m/.try] (hidden7-\m) at (7,2.5-\y) {};

\foreach \m/\l [count=\y] in {1,2,3,missing,4}
  \node [every neuron/.try, neuron \m/.try] (hidden8-\m) at (8,2.5-\y) {};

\foreach \m/\l [count=\y] in {1,2,3,missing,4}
  \node [every neuron/.try, neuron \m/.try] (hidden9-\m) at (9,2.5-\y) {};

\foreach \m [count=\y] in {1,2,missing,3}
  \node [every neuron/.try, neuron \m/.try ] (output-\m) at (10,2-\y) {};

\foreach \l [count=\i] in {1,2,d_0}
  \draw [<-] (input-\i) -- ++(-.75,0)
    node [above, midway] {$x[\l]$};

\node  at (hidden2-2) {\color{red}$i_1$};
\node at (hidden5-3) {\color{red}$i_2$};
\node at (hidden7-2) {\color{red}$i_3$};

\foreach \l [count=\i] in {1,2,d_{k+1}}
  \draw [->] (output-\i) -- ++(.75,0)
    node [above, midway] {$y[\l]$};

\foreach \i in {1,...,3}
  \foreach \j in {1,...,4}
    \draw [draw=green,->] (input-\i) -- (hidden1-\j);

\foreach \i in {1,...,4}
  \foreach \j in {1,3,4}
    \draw [draw=gray,dashed,->] (hidden1-\i) -- (hidden2-\j);

\foreach \i in {1,...,4}
  \foreach \j in {2}
    \draw [draw=green,->] (hidden1-\i) -- (hidden2-\j);

\foreach \i in {1,3,4}
  \foreach \j in {1,...,4}
    \draw [draw=gray,dashed,->] (hidden2-\i) -- (hidden3-\j);

\foreach \i in {2}
  \foreach \j in {1,...,4}
    \draw [draw=red,->] (hidden2-\i) -- (hidden3-\j);

\foreach \i in {1,...,4}
  \foreach \j in {1,...,4}
    \draw [draw=red,->] (hidden3-\i) -- (hidden4-\j);

\foreach \i in {1,...,4}
  \foreach \j in {1,2,4}
    \draw [draw=gray,dashed,->] (hidden4-\i) -- (hidden5-\j);

\foreach \i in {1,...,4}
  \foreach \j in {3}
    \draw [draw=red,->] (hidden4-\i) -- (hidden5-\j);

\foreach \i in {3}
  \foreach \j in {1,...,4}
    \draw [draw=red,->] (hidden5-\i) -- (hidden6-\j);

\foreach \i in {1,2,4}
  \foreach \j in {1,...,4}
    \draw [draw=gray,dashed,->] (hidden5-\i) -- (hidden6-\j);

\foreach \i in {1,...,4}
  \foreach \j in {2}
    \draw [draw=red,->] (hidden6-\i) -- (hidden7-\j);

\foreach \i in {1,...,4}
  \foreach \j in {1,3,4}
    \draw [draw=gray,dashed,->] (hidden6-\i) -- (hidden7-\j);

\foreach \i in {1,3,4}
  \foreach \j in {1,...,4}
    \draw [draw=gray,dashed,->] (hidden7-\i) -- (hidden8-\j);

\foreach \i in {2}
  \foreach \j in {1,...,4}
    \draw [draw=blue,->] (hidden7-\i) -- (hidden8-\j);

\foreach \i in {1,...,4}
  \foreach \j in {1,...,4}
    \draw [draw=blue,->] (hidden8-\i) -- (hidden9-\j);

\foreach \i in {1,...,4}
  \foreach \j in {1,...,3}
    \draw [draw=blue,->] (hidden9-\i) -- (output-\j);

\node [green, align=center, above] at (0,2) {Input\\layer};
\node [red, align=center, above] at (2,2) {$j_1$-th \\pivot};
\node [red, align=center, above] at (5,2) {$j_2$-th \\pivot};
\node [red, align=center, above] at (7,2) {$j_3$-th \\pivot};
\node [blue, align=center, above] at (10,2) {Output \\layer};

\end{tikzpicture}
\caption{\small \label{fig:reg}Illustration of the explicit regularizer as given in Proposition~\ref{prop:reg} for $k=9$, $l=3$, $(i_1,i_2,i_3)=(2,3,2)$ and $(j_1,j_2,j_3)=(2,5,7)$. The {\color{green} head} term {\color{green}$\alpha_{j_1,i_1}^2$} corresponds to the summation over the product of the weights on any pairs of path from an input node to $i_1$-th node in the $j_1$-th hidden layer. Similarly, the {\color{blue} tail} term {\color{blue}$\gamma_{j_l, i_l}^2$} accounts for the product of the weights along any pair of path between the output and the $i_l$-th node in the $j_l$-th hidden layer. Each of the {\color{red} middle} terms {\color{red}$\beta_p^2$}, accumulates the product of of the weights along pair of path between $i_p$-th node in the $(j_p+1)$-th hidden layer and the $i_{p+1}$-th node in the $j_{p+1}$-th hidden layer.}
\end{figure*}
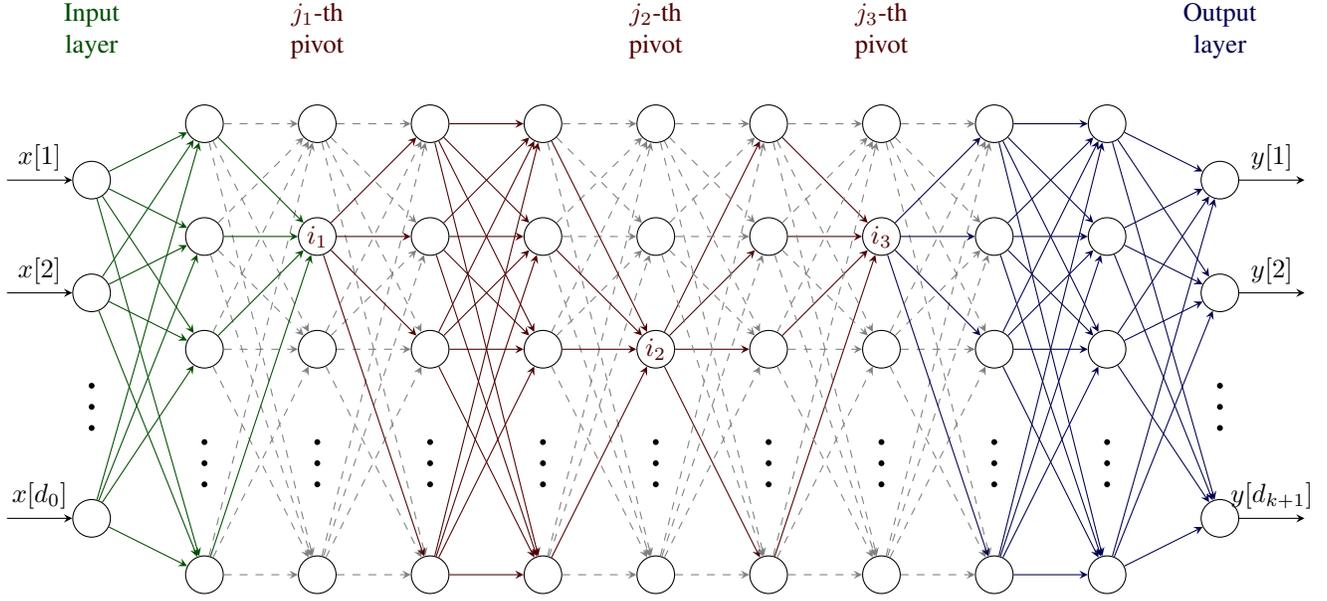
In this section, we give the closed form expression for the explicit regularizer $R(\{ \W_i \})$, and discuss some of its important properties.
\begin{prop}\label{prop:reg}
 The explicit regularizer is composed of $k$ sub-regularizers:
$R(\{ \W_{i}\}) = \sum_{l \in [k]} \lambda^l R_l(\{\W_i\})$, where $\lambda := \frac{1-\theta}{\theta}$. 
Each of the sub-regularizers has the form:
\begin{align*}
R_l(\{\W_i\})= \sum_{\substack{ (j_l,\ldots , j_1) \\ \in {[k] \choose l} }}\sum_{\substack{(i_l,\ldots,i_1) \\ \in [d_{j_l}]\times\cdots \times [d_{j_1}]}} {\color{green} \alpha_{j_1,i_1}^2 } {\color{red} \prod_{p=1\cdots l-1} \beta_p^2 } {\color{blue} \gamma_{j_l,i_l}^2}
\end{align*}
where ${\color{green} \alpha_{j_1,i_1}} := {\color{green}\|  \W_{j_1 \to 1}(i_1,:) \|_\C}$, ${\color{red} \beta_{p}} := {\color{red}\W_{j_{p+1}  \to j_{p}+1}(i_{p+1},i_p)}$, and
${\color{blue} \gamma_{j_l,i_l}} := {\color{blue}\| \W_{k+1 \to j_l+1}(:,i_l) \|}$.
\end{prop}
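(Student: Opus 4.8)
The plan is to compute the variance term $R(\{\W_i\}) = \bE_{\{\b_i\}}[\|\W_{k+1\to 1} - \bar\W_{k+1\to 1}\|_\C^2]$ entrywise and then reorganize the result combinatorially into the pivot-based form. Writing $\|\X\|_\C^2 = \sum_{a,b,b'}\X(a,b)\,\C(b,b')\,\X(a,b')$ and using the unbiasedness $\bE_{\{\b_i\}}[\bar\W_{k+1\to1}]=\W_{k+1\to1}$ to kill the linear terms, the regularizer collapses to a $\C$-weighted sum of covariances,
\begin{equation*}
R(\{\W_i\}) = \sum_{a,b,b'} \C(b,b')\,\mathrm{Cov}\!\big(\bar\W_{k+1\to 1}(a,b),\,\bar\W_{k+1\to 1}(a,b')\big).
\end{equation*}
The first step is to expand each entry $\bar\W_{k+1\to1}(a,b)$ as a sum over directed paths $P$ from input node $b$ to output node $a$: such a path visits exactly one node $n_l^P\in[d_l]$ in each hidden layer $l\in[k]$ and contributes $\tfrac{1}{\theta^k}\,w_P\prod_{l=1}^k \b_l(n_l^P)$, where $w_P$ is the product of the edge weights along $P$.

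Next I would carry out the expectation over the Bernoulli masks. Using independence across nodes and layers together with $\bE[\b]=\bE[\b^2]=\theta$, the expected product of masks for a pair of paths $(P,P')$ equals $\theta^{2k-|S(P,P')|}$, where $S(P,P'):=\{l\in[k]: n_l^P=n_l^{P'}\}$ is the set of layers in which the two paths share a node. Dividing by the normalization $\theta^{2k}$ and subtracting the mean product $w_P w_{P'}$ gives
\begin{equation*}
\mathrm{Cov}\!\big(\bar\W_{k+1\to1}(a,b),\bar\W_{k+1\to1}(a,b')\big)=\sum_{P,P'} w_P\,w_{P'}\,\big(\theta^{-|S(P,P')|}-1\big),
\end{equation*}
where $P$ ranges over $b$-to-$a$ paths and $P'$ over $b'$-to-$a$ paths.

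The crucial algebraic step is to linearize $\theta^{-|S|}$ in $\lambda$. Since $\lambda=\frac{1-\theta}{\theta}$ gives $\theta^{-1}=1+\lambda$, and the indicators $x_l:=\mathbf{1}\{l\in S\}$ are $\{0,1\}$-valued, we have $\theta^{-|S|}=\prod_{l=1}^k(1+\lambda x_l)=\sum_{T\subseteq[k]}\lambda^{|T|}\prod_{l\in T}x_l$, whence $\theta^{-|S|}-1=\sum_{\emptyset\neq T\subseteq[k]}\lambda^{|T|}\,\mathbf{1}\{T\subseteq S(P,P')\}$. Substituting this and exchanging the order of summation recasts the covariance as a sum over nonempty subsets $T=\{j_1<\cdots<j_l\}\subseteq[k]$ of \emph{pivot} layers, weighted by $\lambda^{|T|}$, of the path-pair sum of $w_P w_{P'}$ over all $(P,P')$ that agree at every pivot layer in $T$; grouping by $|T|=l$ produces the factor $\lambda^l$ and the sum over $\binom{[k]}{l}$ in the statement.

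Finally I would evaluate, for a fixed pivot set $T$ and fixed shared nodes $i_1,\ldots,i_l$ at those layers, the constrained path-pair sum. Because $P$ and $P'$ are forced to coincide only at the pivots while their intermediate nodes between consecutive pivots are free and independent, the sum factorizes along network segments: summing out the head segment (input to the first pivot) and folding in the $\C$-weighting over $b,b'$ recovers $\alpha_{j_1,i_1}^2=\|\W_{j_1\to1}(i_1,:)\|_\C^2$; each middle segment between pivots $p$ and $p+1$ yields two independent copies of the same endpoint-to-endpoint weight and hence the square $\beta_p^2=\W_{j_{p+1}\to j_p+1}(i_{p+1},i_p)^2$; and the tail segment, summed over $a$, gives $\gamma_{j_l,i_l}^2=\|\W_{k+1\to j_l+1}(:,i_l)\|^2$. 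Assembling these over the node choices $(i_1,\ldots,i_l)$ reproduces $R_l(\{\W_i\})$ exactly. I expect the bookkeeping of this last factorization to be the main obstacle: one must verify that a matrix-product entry such as $\W_{j_1\to1}(i_1,b)$ is precisely the sum over intermediate sub-paths, that the two independent copies of each middle segment combine into a square (the signature of co-adaptation being penalized only at the pivots), and that the $\C$-weighted head and the $a$-summed tail collapse to the stated norms. The identity $\theta^{-|S|}-1=\sum_{\emptyset\neq T}\lambda^{|T|}\mathbf{1}\{T\subseteq S\}$ is the conceptual key that converts ``layers where two random sub-networks coincide'' into ``chosen pivot layers,'' which is what makes the pivot-based formula of Proposition~\ref{prop:reg} emerge.
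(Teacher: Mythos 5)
Your proposal is correct, but it takes a genuinely different route from the paper. The paper never expands over paths: its proof first establishes a single-layer identity (Lemma~\ref{lem:linear_norm}, computing $\bE[\|\U\diag{\b}\V\x\|^2]$ in closed form) and then applies it recursively, peeling off one dropout layer at a time; each application splits the expectation into a ``$\theta^2$'' term (no pivot at that layer) and a ``$\theta-\theta^2$'' term (pivot at that layer), and unwinding the nested recursion produces the sum over pivot sets and the powers of $\lambda$. Your argument instead works globally: you expand each entry of $\bar\W_{k+1\to 1}$ over directed paths, compute the Bernoulli pair-expectation $\theta^{2k-|S(P,P')|}$ exactly, and use the identity $\theta^{-|S|}-1=\sum_{\emptyset\neq T\subseteq S}\lambda^{|T|}$ to convert ``layers where two sampled sub-networks coincide'' into ``chosen pivot layers,'' after which the constrained path-pair sum factorizes into the head/middle/tail terms. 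Both proofs are complete and yield the same formula; the trade-off is that the paper's recursion stays at the matrix level and avoids path bookkeeping, with the combinatorics emerging implicitly from iterating a simple lemma, whereas your approach requires the heavier pairing argument but makes the structure transparent in one shot --- it explains \emph{why} the pivots appear (they are exactly the layers where the two random sub-networks are forced to share a node, i.e., where co-adaptation is penalized) and where each factor $\lambda^l$, the square $\beta_p^2$, the $\C$-weighted head norm, and the plain tail norm come from. One point your write-up handles correctly but that deserves emphasis, since the argument fails without it: after the binomial exchange, the constraint on a pair $(P,P')$ for a fixed pivot set $T$ is only that the paths agree \emph{at least} on $T$ (not exactly on $T$); it is precisely this relaxation that leaves the intermediate nodes in each segment free and independent, which is what lets the sum factor into $\alpha_{j_1,i_1}^2\prod_p\beta_p^2\,\gamma_{j_l,i_l}^2$.
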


\paragraph{Understanding the regularizer.}
For simplicity, we assume here the case where the data distribution is whitened, i.e. $\C = \I$. This assumption is by no means restrictive, as we can always redefine $\W_1 \gets \W_1 \C^{\frac12}$ to absorb the second moment the first layer. Moreover, it is a common practice to whiten the data as a preprocessing step.

The $l$-th sub-regularizer, i.e. $R_l(\{\W_i\})$, partitions the network graph (see Figure~\ref{fig:reg}) into $l+1$ subgraphs. This partitioning is done via the choice of $l$ \emph{pivot layers}, a set of $l$ distinct hidden layers, indexed by $(j_1,\ldots, j_l) \in {[k] \choose l}$. The sub-regularizer enumerates over all such combinations of pivot layers, and \emph{pivot nodes} within them indexed by $(i_1,\ldots,i_l)\in [d_{j_1}]\times\cdots \times [d_{j_l}]$. For a given set of pivot layers and pivot nodes, the corresponding summand in the sub-regularizer is a product of three types of terms: a ``{\color{green}head}'' term {\color{green}$\alpha_{j_1,i_1}$}, ``{\color{red}middle}'' terms {\color{red}$\beta_p, \ p \in [l-1]$}, and ``{\color{blue}tail}'' terms {\color{blue}$\gamma_{j_l,i_l}$}. It is easy to see that each of the head, middle and tail terms computes a summation over product of the weights along certain walks on the (undirected) graph associated with the network (see Figure~\ref{fig:reg}). For instance, a {\color{green}head} term
\begin{align*}
{\color{green}\alpha_{j_1,i_1}}&=\sum_{i_0\in[d_0]}\sum_{\substack{i'_1,i'_2,\ldots,i'_{j_1 -1}\\ i''_{j_1 -1},\ldots, i''_2, i''_1}}\W_1(i'_1,i_0)\W_2(i'_2,i'_1)\cdots \\
& \W_{j_1}(i_1,i'_{j_l-1}) \W_{j_1}(i_1,i''_{j_l-1})\cdots \W_2(i''_2,i''_1)\W_1(i''_1,i_0),
\end{align*}
is precisely the sum of the product of all weights along all walks from $i_0$ in the input layer to $i_1$ in layer $j_1$ and back to $i_0$, i.e., walks from $i_0 \xrightarrow{i'_1,i'_2,\ldots,i'_{j_1 -1}} i_1 \xrightarrow{i''_{j_1 -1},\ldots, i''_2, i''_1} i_0$. Similarly,  
{\color{red}middle} terms
are the sum of the product of the weights along $i_p \xrightarrow{i'_1,i'_2,\ldots,i'_{j_1 -1}} i_{p+1} \xrightarrow{i''_{j_1 -1},\ldots, i''_2, i''_1} i_p$. 

A few remarks are in order.
\begin{remark}
For $k=1$, the explicit regularizer reduces to $$R(\W_2,\W_1)=\lambda\sum_{i=1}^{d_1}{ {\color{green}\| \W_1(:,i) \|^2} {\color{blue}\| \W_2(i,:) \|^2}},$$ which recovers the regularizer studied by the previous work of~\citet{Cavazza2017analysis} and \citet{mianjy2018implicit} in the setting of matrix factorization and single hidden layer linear networks, respectively.
\end{remark}
\begin{remark}
All sub-regularizers, and consequently the explicit regularizer itself are \emph{rescaling invariant}. That is, for any given implementation $\{ \W_i \}$, and any sequence of scalars $\alpha_1,\ldots,\alpha_{k+1}$ such that $\prod_i \alpha_i = 1$, it holds that $R_l(\{  \W_i \}) = R_l(\{  \alpha_i \W_i\})$. In particular, $R_k$ equals
\begin{align*}
R_k(\{ \W_i \}) &= \sum_{i_k,\ldots,i_1} {\color{green}\| \W_{1}(i_1,:) \|^2} {\color{red}\W_2(i_2,i_1)^2} \\
&{\color{red}\W_3(i_3,i_2)^2 \cdots  \W_k(i_k,i_{k-1})^2} {\color{blue}\| \W_{k+1}(:,i_k) \|^2}.
\end{align*}
Note that $R_k(\{ \W_i \}) = \psi_2^2(\W_{k+1},\ldots,\W_{1})$, the \emph{$\ell_2$-path regularizer}, which which has been recently studied in~\cite{neyshabur2015norm} and~\cite{neyshabur2015path}.
\end{remark}

\subsection{The induced regularizer}\label{sec:multi}
In this section, we study the induced regularizer as given by the optimization problem in Equation~\eqref{eq:induced_reg}. 
We show that the convex envelope of $\Theta$ factors into a product of two terms: a term that only depends on the network architecture and the dropout rate, and a term that only depends on the network map. These two factors are captured by the following definitions.

\begin{definition}[effective regularization parameter]\label{def:flow} For given $\{ d_i \}$ and $\lambda$, we refer to the following quantity as the effective regularization parameter:
\begin{align*}
\nu_{\{\d_i\}} := \sum_{l\in [k]}\sum_{ (j_l,\ldots , j_1) \in {[k] \choose l}}{\frac{\lambda^l}{\prod_{i\in [l]} d_{j_i}}}.
\end{align*}
We drop the subscript $\{d_i\}$ whenever it is clear from the context.
\end{definition}
The effective regularization parameter naturally arises when we lowerbound the explicit regularizer (see Equation~\eqref{eq:lb_def}). It is only a function of the network architecture and the dropout rate and does not depend on the weights -- it increases with the dropout rate and the depth of the network, but decreases with the width. 
\begin{definition}[equalized network]\label{def:equalized}
A network implemented by $\{ \W_i \}_{i=1}^{k+1}$ is said to be \emph{equalized} if $\| \W_{k+1}\cdots \W_1  \C^{\frac12}  \|_*$
is equally distributed among all the summands in Proposition~\ref{prop:reg}, i.e. for any $l\in [k]$, $(j_l,\ldots , j_1) \in {[k] \choose l}$, and $(i_l,\ldots,i_1)\in [d_{j_l}]\times\cdots\times[d_{j_1}]$ it holds that $$\left| {\color{green}\alpha_{j_1,i_1}}   {\color{red}\beta_1 \cdots  \beta_{l-1}}  {\color{blue}\gamma_{j_l,i_l}} \right| = \frac{\| \W_{k+1}\cdots \W_1  \C^{\frac12}  \|_*}{\Pi_l d_{j_l}}.$$
\end{definition}

We are now ready to state the main result of this section.
Recall that the convex envelope of a function is the largest convex under-estimator of that function. 
We show that irrespective of the architecture, the convex envelope of the induced regularizer is proportional to the squared nuclear norm of the network map times the principal root of the second moment.
\begin{theorem}[Convex Envelope]\label{thm:envelope}
For any architecture $\{ d_i \}$ and any network map $\M\in \R^{d_{k+1}\times d_0}$ implementable by that architecture, it holds that: 
\begin{equation*}
\Theta^{**}(\M) = \nu_{\{ d_i \}} \| \M  \C^{\frac12}  \|_*^2
\end{equation*}
Furthermore, $\Theta(\M)=\Theta^{**}(\M)$ if and only if the network is equalized.
\end{theorem}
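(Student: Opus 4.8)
The plan is to sandwich the induced regularizer between the convex function $f(\M):=\nu_{\{d_i\}}\|\M\C^{\frac12}\|_*^2$ and itself, establishing $f\le\Theta$ and $\Theta^{**}=f$, and then to read off the equality condition from a tightness analysis. First I would prove the lower bound $R(\{\W_i\})\ge \nu_{\{d_i\}}\|\M\C^{\frac12}\|_*^2$ for every implementation of a fixed map $\M=\W_{k+1\to 1}$, working term by term in the decomposition of Proposition~\ref{prop:reg}. For a fixed choice of pivot layers $(j_1,\ldots,j_l)$ I would expand $\M\C^{\frac12}=\W_{k+1\to j_l+1}\,\W_{j_l\to j_{l-1}+1}\cdots\W_{j_1\to 1}\C^{\frac12}$ as a sum of rank-one outer products indexed by the pivot nodes $(i_1,\ldots,i_l)$, each weighted by the middle product $\beta_1\cdots\beta_{l-1}$ and carrying $\gamma_{j_l,i_l}$ and $\alpha_{j_1,i_1}$ as the norms of its left and right factors. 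The triangle inequality for the nuclear norm together with $\|uv^\top\|_*=\|u\|\|v\|$ then gives $\sum_{(i_l,\ldots,i_1)}|\alpha_{j_1,i_1}\beta_1\cdots\beta_{l-1}\gamma_{j_l,i_l}|\ge\|\M\C^{\frac12}\|_*$, and a power-mean step over the $\prod_{i\in[l]}d_{j_i}$ summands converts the sum of squares defining $R_l$ into $\tfrac{1}{\prod_{i\in[l]}d_{j_i}}\|\M\C^{\frac12}\|_*^2$. Summing against $\lambda^l$ over all $l$ and all pivot-layer combinations reproduces exactly $\nu_{\{d_i\}}$, so after taking the infimum over implementations, $\Theta(\M)\ge f(\M)$; since $f$ is convex (a squared norm precomposed with a linear map), this already yields $\Theta^{**}\ge f$.

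For the reverse inequality $\Theta^{**}\le f$ I would pass to Fenchel conjugates: because $f\le\Theta$ gives $f^*\ge\Theta^*$, it suffices to prove $\Theta^*\ge f^*$, whence $\Theta^{**}=(\Theta^*)^*\le (f^*)^*=f^{**}=f$. Writing $\Zz:=\M\C^{\frac12}$, the conjugate $f^*$ is that of a scaled squared nuclear norm, and for each dual direction its supremum is attained at a map $\M^*$ for which $\M^*\C^{\frac12}$ has a \emph{flat} spectrum (all nonzero singular values equal, supported on the top singular subspace of the dual point). The crux is then a construction lemma: any map $\M^*$ with $\M^*\C^{\frac12}$ of flat spectrum admits an \emph{equalized} implementation, for which the equalization identity of Definition~\ref{def:equalized} forces $R(\{\W_i^*\})=f(\M^*)$ and hence $\Theta(\M^*)=f(\M^*)$. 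Feeding this maximizer into $\Theta^*(N)\ge\langle N,\M^*\rangle-\Theta(\M^*)=\langle N,\M^*\rangle-f(\M^*)=f^*(N)$ gives $\Theta^*\ge f^*$ and closes the sandwich, proving $\Theta^{**}=\nu_{\{d_i\}}\|\M\C^{\frac12}\|_*^2$.

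The ``if and only if'' then follows by tracking equality in the two inequalities of the first paragraph. For each $l$ and each pivot combination the chain $R_l^{(j_1,\ldots,j_l)}\ge\tfrac{1}{\prod d_{j_i}}\big(\sum|\alpha_{j_1,i_1}\beta_1\cdots\beta_{l-1}\gamma_{j_l,i_l}|\big)^2\ge\tfrac{1}{\prod d_{j_i}}\|\M\C^{\frac12}\|_*^2$ is tight \emph{simultaneously} over all $l$ and all combinations exactly when the power-mean step is tight (all summands within a group equal) and the nuclear-norm triangle step is tight (their absolute values sum to $\|\M\C^{\frac12}\|_*$); together these say each $|\alpha_{j_1,i_1}\beta_1\cdots\beta_{l-1}\gamma_{j_l,i_l}|$ equals $\|\M\C^{\frac12}\|_*/\prod_{i\in[l]}d_{j_i}$, which is precisely Definition~\ref{def:equalized}. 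Since the defining infimum is attained (Lemma~\ref{lem:spectral}) and $\Theta^{**}=f$, we conclude $\Theta(\M)=\Theta^{**}(\M)$ iff the optimal implementation achieves the joint lower bound, i.e. iff it is equalized.

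I expect the main obstacle to be the construction lemma of the second paragraph: equalizing a flat-spectrum map across a \emph{deep} network requires balancing the head, middle, and tail terms simultaneously over every subset of pivot layers $\binom{[k]}{l}$, which is far more rigid than the single rotation/Hadamard balancing that suffices for one hidden layer. Producing a single implementation that is optimal for all partitions at once — rather than for one sub-regularizer in isolation — is the technical heart of the argument.
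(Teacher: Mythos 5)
Your plan has the same two-sided structure as the paper's proof, and your first and third paragraphs reproduce it essentially verbatim: the lower bound $R(\{\W_i\})\ge\nu_{\{d_i\}}\|\W_{k+1\to1}\C^{\frac12}\|_*^2$, obtained per pivot combination via Cauchy--Schwarz and the nuclear-norm triangle inequality, is exactly Lemma~\ref{lem:reg_lb} (including the equality analysis that recovers Definition~\ref{def:equalized}), and the upper bound on $\Theta^{**}$ via Fenchel conjugation is exactly the role of Lemma~\ref{lem:fenchel}.

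The genuine issue is the ``construction lemma'' you single out as the technical heart. In the generality you state it --- every $\M^*$ such that $\M^*\C^{\frac12}$ has flat spectrum admits an equalized implementation --- it is not available: for rank greater than one this is precisely what the paper leaves open (equalizability of arbitrary maps is known only for $k=1$, and the paper explicitly says it ``remains to be seen whether the lower bound can be achieved for deeper networks''). If your proof really required it, the proof would be stuck. Fortunately it does not: the supremum defining $f^*(N)=\sup_\M\langle N,\M\rangle-\nu_{\{d_i\}}\|\M\C^{\frac12}\|_*^2$ can always be attained at a \emph{rank-one} point. Writing $\Y:=N\C^{-\frac12}$ with top singular pair $(\u_1,\v_1)$, the choice $\M^*=\frac{\|\Y\|_2}{2\nu_{\{d_i\}}}\u_1\v_1^\top\C^{-\frac12}$ achieves $f^*(N)=\frac{1}{4\nu_{\{d_i\}}}\|\Y\|_2^2$ by H\"older plus a scalar quadratic maximization. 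Rank-one maps do admit equalized implementations by the explicit all-ones construction $\W_1\propto\1_{d_1}\v_1^\top\C^{-\frac12}$, $\W_i\propto\1_{d_i}\1_{d_{i-1}}^\top$, $\W_{k+1}\propto\u_1\1_{d_k}^\top$ (this is Proposition~\ref{prop:equalizable}), for which every head, middle, and tail term is constant, so equalization holds over all pivot partitions simultaneously and trivially --- the rigidity you worry about disappears at rank one. With this choice $\Theta(\M^*)=f(\M^*)$, and your inequality $\Theta^*(N)\ge\langle N,\M^*\rangle-\Theta(\M^*)=f^*(N)$ closes the sandwich. This repaired version is in essence what the paper's Lemma~\ref{lem:fenchel} does: it restricts the supremum in $\Theta^*$ to the one-parameter family $\{\W_i^\alpha\}$ of exactly these rank-one equalized implementations, obtains a lower bound quadratic in $\|\M\C^{-\frac12}\|_2$, and conjugates. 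So your route coincides with the paper's once the flat-spectrum lemma is replaced by its rank-one special case; as written, with that lemma unproved, there is a hole, but it has a one-line repair.
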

This result is particularly interesting because it connects dropout, an algorithmic heuristic to avoid overfitting, to nuclear norm regularization, which is a classical regularization method with strong theoretical foundations. We remark that a result similar to Theorem~\ref{thm:envelope} was recently established for matrix factorization~\cite{Cavazza2017analysis}. 

\subsection{Global optimality}\label{sec:1d}
Theorem~\ref{thm:envelope} provides a sufficient and necessary condition under which the induced regularizer equals its convex envelope. If any network map can be implemented by an equalized network, then $\Theta(\M)=\Theta^{**}(\M)=\nu_{\{ d_i \}}\| \M\C^\frac12 \|_*^2$, and the learning problem in Equation~\eqref{eq:optim} is a convex program. In particular, for the case of linear networks with single hidden layer, \citet{mianjy2018implicit} show that any network map can be implemented by an equalized network, which enables them to characterize the set of global optima under the additional generative assumption $\y=\M\x$. However, it is not clear if the same holds for general deep linear networks since the regularizer here is more complex. Nonetheless, the following result provides a sufficient condition under which global optima of $L_\theta(\{ \W_i \})$ are completely characterized.
\begin{theorem}\label{thm:main_1d}
Let $\C_{\y\x}:=\bE[\y\x^\top]$ and $\C:=\bE[\x\x^\top]$, and denote $\bar\M:=\C_{\y\x}\C^{-\frac12}$. If $\sigma_1(\bar\M) - \sigma_2(\bar\M) \geq \frac{1}{\nu}\sigma_2(\bar\M)$, then $\M^*$, the global optimum of Problem~\ref{eq:optim}, is given by $$\W^*_{k+1 \to 1} = \cS_{\frac{\nu\sigma_1(\bar\M)}{1+\nu}}(\bar\M)\C^{-\frac12},$$ where $\cS_\alpha(\bar\M)$ shrinks the spectrum of matrix $\bar\M$ by $\alpha$ and thresholds it at zero. Furthermore, it is possible to implement $\M^*$ by an equalized network $\{ \W^*_i\}$ which is a global optimum of $L_\theta(\{ \W_i \})$.
\end{theorem}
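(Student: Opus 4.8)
The strategy is to sandwich the non-convex objective of Problem~\eqref{eq:optim} between $L+\Theta^{**}$ and itself: I solve the convex relaxation $L+\Theta^{**}$ exactly, and then exhibit an equalized implementation of its optimum, so that the two bounds coincide and the relaxation is tight.

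First I would put the population risk into completed-square form. Writing $\Zz := \M\C^{\frac12}$ and $\bar\M := \C_{\y\x}\C^{-\frac12}$, expanding $L(\M)=\bE\|\y\|^2 - 2\langle \C_{\y\x},\M\rangle + \|\M\C^{\frac12}\|_F^2$ and using $\langle\C_{\y\x},\M\rangle=\langle\bar\M,\Zz\rangle$ gives
\[
L(\M) = \big(\bE\|\y\|^2 - \|\bar\M\|_F^2\big) + \|\Zz - \bar\M\|_F^2 ,
\]
where $\C$ being full rank makes $\M\mapsto\Zz$ a bijection. Combining this with the lower bound $\Theta(\M)\ge\Theta^{**}(\M)=\nu\|\Zz\|_*^2$ from Theorem~\ref{thm:envelope}, I obtain, for every feasible $\M$,
\[
L(\M)+\Theta(\M) \ \ge\ \mathrm{const} + \|\Zz-\bar\M\|_F^2 + \nu\|\Zz\|_*^2 .
\]

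Next I would minimize the right-hand side over all $\Zz$ (dropping the rank constraint and reinstating it at the end). By von Neumann's trace inequality the minimizer shares left/right singular vectors with $\bar\M$, so the problem reduces to minimizing $\sum_i(s_i-\sigma_i)^2+\nu(\sum_i s_i)^2$ over singular values $s_i\ge0$, with $\sigma_i:=\sigma_i(\bar\M)$. Stationarity on the active set gives $s_i=(\sigma_i-\nu t)_+$ with $t=\sum_j s_j$; summing over an active set of size $r$ yields $t=\frac{\sum_{i\le r}\sigma_i}{1+r\nu}$. The choice $r=1$ gives $t=\frac{\sigma_1}{1+\nu}$, which is self-consistent (i.e. $s_2=(\sigma_2-\nu t)_+=0$) exactly when $\sigma_2\le\frac{\nu}{1+\nu}\sigma_1$, which is the eigengap hypothesis $\sigma_1-\sigma_2\ge\frac1\nu\sigma_2$. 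By convexity this is the global minimizer, and it equals $\Zz^\star=\cS_\alpha(\bar\M)$ with $\alpha=\nu t=\frac{\nu\sigma_1(\bar\M)}{1+\nu}$; hence $\M^*=\Zz^\star\C^{-\frac12}=\cS_\alpha(\bar\M)\C^{-\frac12}$ as claimed, and since $\Zz^\star$ is rank one, $\M^*$ is feasible for any architecture.

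Finally I would close the gap by constructing an equalized implementation of the rank-one map $\Zz^\star=c\,u_1v_1^\top$, $c=\frac{\sigma_1(\bar\M)}{1+\nu}$. Set $q_0=\C^{-\frac12}v_1$, $q_{k+1}=u_1$, $q_j=\frac{1}{\sqrt{d_j}}\1_{d_j}$ for each hidden layer, and $\W_i^*=c_i\,q_iq_{i-1}^\top$ with $\prod_i c_i=c$; the intermediate unit vectors telescope so that $\W^*_{k+1\to1}=c\,u_1(\C^{-\frac12}v_1)^\top=\M^*$. Reading off Proposition~\ref{prop:reg} for this chain, the $c_i$ collapse to $c$ and each pivot node contributes $|q_{j_p}(i_p)|^2$, so every monomial equals $\alpha_{j_1,i_1}\beta_1\cdots\beta_{l-1}\gamma_{j_l,i_l}=c\prod_{p=1}^l|q_{j_p}(i_p)|^2=c\prod_{p=1}^l d_{j_p}^{-1}$ (the $\C$-norm in $\alpha$ restoring $\|q_0\|_\C=\|v_1\|=1$), which is precisely $\frac{\|\Zz^\star\|_*}{\prod_p d_{j_p}}$, the equalization condition of Definition~\ref{def:equalized}. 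Squaring and summing over the $\prod_p d_{j_p}$ pivot nodes and over all pivot layers then gives $R(\{\W_i^*\})=c^2\sum_{l\in[k]}\lambda^l\sum_{(j_l,\dots,j_1)}\prod_{i\in[l]}d_{j_i}^{-1}=\nu c^2=\Theta^{**}(\M^*)$. Thus $L_\theta(\{\W_i^*\})=L(\M^*)+\Theta^{**}(\M^*)$ attains the lower bound, proving that $\M^*$ is the global optimum of Problem~\eqref{eq:optim} and that $\{\W_i^*\}$ globally minimizes $L_\theta$. I expect the last step to be the main obstacle: one must guess the correct uniform rank-one factorization and verify that it simultaneously realizes $\M^*$ and equalizes every one of the exponentially many head/middle/tail monomials; by contrast, the surrogate minimization is routine once the squared-nuclear-norm envelope of Theorem~\ref{thm:envelope} is available.
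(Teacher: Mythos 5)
Your proposal is correct and follows essentially the same route as the paper: lower-bound the objective by the convex surrogate $L(\M)+\nu\|\M\C^{\frac12}\|_*^2$ (Theorem~\ref{thm:envelope}), solve that program via von Neumann's trace inequality and KKT conditions to get the shrinkage-thresholding solution (the paper's Lemma~\ref{lem:global}), and make the bound tight by the uniform rank-one equalized factorization $\W_1\propto\1_{d_1}\v^\top$, $\W_i\propto\1_{d_i}\1_{d_{i-1}}^\top$, $\W_{k+1}\propto\u\1_{d_k}^\top$ (the paper's Proposition~\ref{prop:equalizable}), with the eigengap hypothesis guaranteeing the surrogate's optimum is rank one. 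If anything, your explicit sandwich argument states the tightness step a bit more cleanly than the paper's conditional phrasing, but the mathematical content is identical.
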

The gap condition in the theorem above can always be satisfied (e.g. by increasing the dropout rate or the depth, or decreasing the width) as long as there exists a gap between the first and the second singular values of $\bar\M$. Moreover, for the special case of deep linear networks with one output neuron~\cite{ji2018gradient,nacson2018convergence}, this condition is always satisfied since $\bar\M\in \R^{1 \times d_0}$ and $\sigma_2(\bar\M)=0$. 
\begin{corollary}
\label{cor:1d}
Consider the class of deep linear networks with a single output neuron. Let $\{ \W^*_i \}$ be a minimizer of $L_\theta$. For any architecture $\{ d_i \}$ and any network map $\W_{k+1\to 1}$, it holds that: (1) $\Theta(\W_{k+1\to 1}) = \nu\|  \W_{k+1\to 1} \|_\C^2,$
(2) $\W^*_{k+1 \to 1} \!\!=\!\! \frac{1}{1+\nu} \C_{\y\x}$, (3) the network specified by $\{ \W^*_i \}$ is equalized. 
\end{corollary}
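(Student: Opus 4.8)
The plan is to derive Corollary~\ref{cor:1d} as a specialization of Theorem~\ref{thm:main_1d} to the single-output-neuron case, where $d_{k+1}=1$ so that the network map $\W_{k+1\to 1}\in\R^{1\times d_0}$ is a row vector and $\bar\M=\C_{\y\x}\C^{-\frac12}\in\R^{1\times d_0}$ has rank at most one. First I would observe that since $\bar\M$ has a single nonzero singular value, $\sigma_2(\bar\M)=0$, and hence the eigengap condition $\sigma_1(\bar\M)-\sigma_2(\bar\M)\geq\frac{1}{\nu}\sigma_2(\bar\M)$ holds trivially (both sides reduce to $\sigma_1(\bar\M)\geq 0$). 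This immediately licenses the application of Theorem~\ref{thm:main_1d} without any further assumptions on the architecture or dropout rate.

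For part (2), I would substitute $\sigma_2(\bar\M)=0$ into the soft-thresholding formula. Because $\bar\M$ is rank one, its singular value decomposition is $\bar\M=\sigma_1(\bar\M)\,\u\v^\top$ with $\u\in\R$ a unit scalar (i.e. $\pm 1$); shrinking the single singular value by $\frac{\nu\sigma_1(\bar\M)}{1+\nu}$ leaves $\sigma_1(\bar\M)-\frac{\nu\sigma_1(\bar\M)}{1+\nu}=\frac{\sigma_1(\bar\M)}{1+\nu}$, so $\cS_{\frac{\nu\sigma_1(\bar\M)}{1+\nu}}(\bar\M)=\frac{1}{1+\nu}\bar\M$. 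Therefore the optimal network map is $\W^*_{k+1\to 1}=\frac{1}{1+\nu}\bar\M\,\C^{-\frac12}=\frac{1}{1+\nu}\C_{\y\x}\C^{-\frac12}\C^{-\frac12}=\frac{1}{1+\nu}\C_{\y\x}\C^{-1}$. I would double-check the normalization here: the stated claim (2) reads $\W^*_{k+1\to 1}=\frac{1}{1+\nu}\C_{\y\x}$, so I would verify whether the intended reading absorbs $\C^{-1}$ into a whitening convention (as in the ``understanding the regularizer'' paragraph, where $\C=\I$ is assumed WLOG by rescaling $\W_1\gets\W_1\C^{\frac12}$); under $\C=\I$ the two expressions coincide, and this is the likely source of the apparent discrepancy.

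For parts (1) and (3), I would invoke the final sentence of Theorem~\ref{thm:main_1d}, which guarantees that $\M^*$ can be implemented by an \emph{equalized} network that is a global optimum of $L_\theta$. By Theorem~\ref{thm:envelope}, equalization is exactly the condition under which the induced regularizer coincides with its convex envelope, $\Theta(\M)=\Theta^{**}(\M)=\nu\|\M\C^{\frac12}\|_*^2$. For a row vector $\M$, the nuclear norm equals the Euclidean norm of the (single) row, so $\|\M\C^{\frac12}\|_*=\|\M\C^{\frac12}\|_F=\sqrt{\trace{\M\C\M^\top}}=\|\M\|_\C$; squaring gives $\Theta(\W_{k+1\to 1})=\nu\|\W_{k+1\to 1}\|_\C^2$, establishing (1), and the existence of the equalized implementation establishes (3).

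The main obstacle I anticipate is not in the logical chain—which is a clean specialization—but in reconciling the $\C$-dependence between the clean statement (2) and the general formula in Theorem~\ref{thm:main_1d}. I would be careful to state explicitly whether (2) presumes the whitened setting $\C=\I$ or whether it intends $\W^*_{k+1\to1}=\frac{1}{1+\nu}\C_{\y\x}\C^{-1}$; a secondary check is confirming that the collapse of the nuclear norm to a Euclidean norm in the rank-one case is valid (it is, since a matrix with a single nonzero row has exactly one nonzero singular value equal to that row's norm). Everything else follows directly from the two theorems already established.
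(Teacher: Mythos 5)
Your route is the paper's own (implicit) one: Corollary~\ref{cor:1d} is meant to follow by specializing Theorem~\ref{thm:main_1d}, noting that $\bar\M\in\R^{1\times d_0}$ forces $\sigma_2(\bar\M)=0$ so the gap condition holds vacuously, and that the shrinkage collapses to $\cS_{\frac{\nu\sigma_1(\bar\M)}{1+\nu}}(\bar\M)=\frac{1}{1+\nu}\bar\M$. Your flag on claim (2) is also legitimate: tracking the general formula of Theorem~\ref{thm:main_1d} (and of Lemma~\ref{lem:global}) gives $\W^*_{k+1\to 1}=\frac{1}{1+\nu}\C_{\y\x}\C^{-1}$, which agrees with the stated $\frac{1}{1+\nu}\C_{\y\x}$ only under the whitening convention $\C=\I$ that the paper adopts elsewhere; that is a defect of the corollary's phrasing, not of your derivation.

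The one genuine gap is in part (1). That claim quantifies over \emph{every} network map $\W_{k+1\to 1}$, whereas your argument routes through the final sentence of Theorem~\ref{thm:main_1d}, which only supplies an equalized implementation of the particular optimum $\M^*$. The equality condition of Theorem~\ref{thm:envelope} ($\Theta(\M)=\Theta^{**}(\M)$ iff the network is equalized) can be invoked for a given $\M$ only once you know that an equalized implementation of \emph{that} $\M$ exists. The missing ingredient is Proposition~\ref{prop:equalizable}: since $d_{k+1}=1$, every network map is a row vector $\w^\top$ of rank at most one, hence admits an equalized implementation for any architecture --- the paper even exhibits it explicitly, $\W_1=\frac{\1_{d_1}\w^\top}{\sqrt{d_1}}$ and $\W_i=\frac{\1_{d_i}\1_{d_{i-1}}^\top}{\sqrt{d_i d_{i-1}}}$ for $i\neq 1$. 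With that in hand, the equality case of Lemma~\ref{lem:reg_lb} gives $\Theta(\M)=\nu\|\M\C^{\frac12}\|_*^2$ for every $\M$, and your rank-one identity $\|\M\C^{\frac12}\|_*=\|\M\|_\C$ finishes (1). The remainder of your argument --- (2) via the spectral shrinkage and (3) via the final sentence of Theorem~\ref{thm:main_1d} --- stands as written.
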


We conclude this section with a remark. We know from the early work of~\cite{srivastava2014dropout} that feature dropout in linear regression is closely related to ridge regression. Corollary~\ref{cor:1d} generalizes the results of \cite{srivastava2014dropout} to deep linear networks, and establishes a similar connection between dropout training and ridge regression. 

\section{Proof Ideas}\label{sec:proofs}
Here, we sketch proofs of the main results from Section~\ref{sec:main_results}; complete proofs are deferred to the supplementary.\vspace{-5pt}
\paragraph{Sketch of the Proof of Theorem~\ref{thm:envelope}} The key steps are: 
\begin{enumerate}
\item \label{item1} First, in Lemma~\ref{lem:reg_lb}, we show that for any set of weights $\{ \W_i \}$, it holds that $R(\{ \W_i \}) \geq \nu_{\{ d_i \}}\| \W_{k+1\to 1}  \C^{\frac12}  \|_*^2$. In particular, this implies that $\Theta(\M)\geq \nu_{\{ d_i \}}\| \M  \C^{\frac12}  \|_*^2$ holds for any $\M$.
\item \label{item2}Next, in Lemma~\ref{lem:fenchel}, we show that $\Theta^{**}(\M)\leq \nu_{\{ d_i \}}\| \M \C^{\frac12}  \|_*^2$ holds for all $\M$.
\item The claim is implied by Lemmas~\ref{lem:reg_lb} and~\ref{lem:fenchel}, and the fact that $\| \cdot \|_*^2$ is a convex function. 
\end{enumerate}

\begin{lemma}\label{lem:reg_lb}
Let $\{ \W_i \}$ be an arbitrary set of weights. The explicit regularizer $R(\{\W_{i}\})$ satisfies
$$R(\{\W_{i}\}) \geq \nu_{\{ d_i \}} \| \W_{k+1}\W_k \cdots \W_1 \C^{\frac12}  \|_*^2,$$
and the equality holds if and only if the network is equalized. 
\end{lemma}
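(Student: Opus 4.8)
The plan is to establish the bound one summand at a time and then recombine via the definition of the effective regularization parameter. Fix a number of pivots $l\in[k]$ and a choice of pivot layers $(j_l,\ldots,j_1)\in{[k]\choose l}$ with $j_1<\cdots<j_l$, and set $N:=\prod_{p\in[l]} d_{j_p}$. The crucial observation is that the product of the head, middle, and tail factors in Proposition~\ref{prop:reg} is exactly a rank-one atom of the whitened network map. Splitting the product $\W_{k+1\to1}\C^{\frac12}=\W_{k+1\to j_l+1}\W_{j_l\to j_{l-1}+1}\cdots\W_{j_2\to j_1+1}\W_{j_1\to1}\C^{\frac12}$ at the pivot layers and expanding over the intermediate indices gives
\begin{equation*}
\W_{k+1\to1}\C^{\frac12}=\sum_{(i_1,\ldots,i_l)} c_{i_1,\ldots,i_l}\,\u_{i_l}\v_{i_1}^\top,
\end{equation*}
where $\u_{i_l}:=\W_{k+1\to j_l+1}(:,i_l)$, $\v_{i_1}^\top:=(\W_{j_1\to1}\C^{\frac12})(i_1,:)$, and $c_{i_1,\ldots,i_l}:=\prod_{p=1}^{l-1}\W_{j_{p+1}\to j_p+1}(i_{p+1},i_p)=\prod_{p=1}^{l-1}\beta_p$. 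By the convention $\|\X\|_\C^2=\trace{\X\C\X^\top}$ one checks $\|\u_{i_l}\|=\gamma_{j_l,i_l}$ and $\|\v_{i_1}\|=\alpha_{j_1,i_1}$.

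Given this factorization I would bound the inner double sum of $R_l$ from below in two steps. First, the power-mean (Cauchy--Schwarz) inequality applied to the $N$ nonnegative numbers $|\alpha_{j_1,i_1}\beta_1\cdots\beta_{l-1}\gamma_{j_l,i_l}|$ yields
\begin{equation*}
\sum_{(i_1,\ldots,i_l)}\alpha_{j_1,i_1}^2\Big(\prod_{p=1}^{l-1}\beta_p^2\Big)\gamma_{j_l,i_l}^2\;\geq\;\frac1N\Big(\sum_{(i_1,\ldots,i_l)}|\alpha_{j_1,i_1}\beta_1\cdots\beta_{l-1}\gamma_{j_l,i_l}|\Big)^2,
\end{equation*}
with equality iff all these magnitudes coincide. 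Second, since $\|\u\v^\top\|_*=\|\u\|\,\|\v\|$, the triangle inequality for the nuclear norm applied to the decomposition above gives
\begin{equation*}
\sum_{(i_1,\ldots,i_l)}|\alpha_{j_1,i_1}\beta_1\cdots\beta_{l-1}\gamma_{j_l,i_l}|=\sum_{(i_1,\ldots,i_l)}|c_{i_1,\ldots,i_l}|\,\|\u_{i_l}\|\,\|\v_{i_1}\|\;\geq\;\|\W_{k+1\to1}\C^{\frac12}\|_*.
\end{equation*}
Chaining the two displays bounds each fixed summand by $\frac1N\|\W_{k+1\to1}\C^{\frac12}\|_*^2$; summing over $(j_l,\ldots,j_1)\in{[k]\choose l}$ and over $l$ with weights $\lambda^l$, and recognizing $\sum_l\sum_{(j)}\frac{\lambda^l}{\prod_p d_{j_p}}$ as $\nu_{\{d_i\}}$, delivers $R(\{\W_i\})\geq\nu_{\{d_i\}}\|\W_{k+1\to1}\C^{\frac12}\|_*^2$.

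For the equality case, because $\lambda^l>0$ and every summand is bounded below by its $\nu$-contribution, attaining the overall bound forces both inequalities to be tight for every $l$ and every pivot choice at once. Tightness of the power-mean step makes all magnitudes $|\alpha_{j_1,i_1}\beta_1\cdots\beta_{l-1}\gamma_{j_l,i_l}|$ (for fixed $l,(j)$) equal to a common value $m$; tightness of the nuclear-norm step forces $Nm=\|\W_{k+1\to1}\C^{\frac12}\|_*$, so that $|\alpha_{j_1,i_1}\beta_1\cdots\beta_{l-1}\gamma_{j_l,i_l}|=\|\W_{k+1\to1}\C^{\frac12}\|_*/\prod_p d_{j_p}$ for all pivot nodes, which is exactly Definition~\ref{def:equalized}. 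Conversely, substituting the equalized values makes each sum of magnitudes equal $\|\W_{k+1\to1}\C^{\frac12}\|_*$ and each double sum equal $\frac1N\|\W_{k+1\to1}\C^{\frac12}\|_*^2$, so the bound is met with equality. I expect the main obstacle to be the decomposition identity and its index bookkeeping, and on the equality side, verifying that the single scalar condition of Definition~\ref{def:equalized} simultaneously encodes tightness of both the power-mean and the nuclear-norm (triangle) inequalities, rather than only one of them.
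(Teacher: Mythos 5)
Your proposal is correct and follows essentially the same route as the paper's proof: the same splitting of $\W_{k+1\to1}\C^{\frac12}$ at the pivot layers into rank-one atoms $c_{i_1,\ldots,i_l}\,\u_{i_l}\v_{i_1}^\top$, the same Cauchy--Schwarz step with constant $\frac{1}{\prod_p d_{j_p}}$, the same nuclear-norm triangle inequality using $\|\u\v^\top\|_*=\|\u\|\,\|\v\|$, and the same recombination over $l$ and pivot choices into $\nu_{\{d_i\}}$. Your treatment of the equality case (the equalization condition of Definition~\ref{def:equalized} being exactly the simultaneous tightness of both inequalities) also matches the paper's argument.
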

We sketch the proof for isotropic distributions ($\C=\I$), and emphasize the role of equalization and effective regularization parameter. 

\paragraph{Sketch of the Proof of Lemma~\ref{lem:reg_lb}}
We show that each term in the explicit regularizer is lower bounded by some multiple of the square of the nuclear norm of the linear map implemented by the network. We begin by lower bounding a particular summand in the expansion of $R_l(\{\W_i\})$ given in Proposition~\ref{prop:reg} with indices $j_1,\ldots,j_l$: 
\begin{align*}
&R_{\{j_l,\ldots,j_1\}}(\{\W_i\}) = \sum_{ i_l,\ldots,i_1}  {\color{green} \alpha_{j_1,i_1}^2}   {\color{red}\prod_{p=1\cdots l-1} \beta_p^2}  {\color{blue}\gamma_{j_l,i_l}^2 }\\
& \geq  \frac{1}{\Pi_l d_{j_l}} \left( \sum_{ i_l,\ldots,i_1}  \left|{\color{green} \alpha_{j_1,i_1}}  {\color{red} \beta_1 \cdots  \beta_{l-1} } {\color{blue} \gamma_{j_l,i_l} } \right| \right)^2,
\end{align*}
where the inequality follows due to Cauchy-Schwartz inequality and holds with equality if and only if all the summands in $\phi:=\sum_{ i_l,\ldots,i_1} \left|{\color{green} \alpha_{j_1,i_1}}  {\color{red} \beta_1 \cdots  \beta_{l-1} } {\color{blue} \gamma_{j_l,i_l} } \right|$ are equal for all $i_l,\ldots,i_1 \in [d_{j_l}] \times \cdots \times  [d_{j_1}]$. At the same time, we have that 
\begin{align*}
& \phi = \sum_{i_l,\ldots,i_1} {\color{blue}\| \W_{k+1\to j_l+1}(:,i_l)\|} {\color{red} \left| \beta_1 \cdots  \beta_{l-1} \right|} {\color{green}  \|\W_{j_1 \to 1}(i_1,:) \|} \\
& \stackrel{(a)}{=} \sum_{i_l,\ldots,i_1} \|{\color{blue} \W_{k+1\to j_l+1}(:,i_l)} {\color{red}  \beta_1 \cdots  \beta_{l-1}} {\color{green} \W_{j_1 \to 1}(i_1,:)^\top} \|_* \\
& \geq  \| \sum_{i_l,\ldots,i_1} {\color{blue} \W_{k+1\to j_l+1}(:,i_l)} {\color{red} \beta_1 \cdots  \beta_{l-1} } {\color{green} \W_{j_1 \to 1}(i_1,:)^\top} \|_* \\
&\stackrel{(b)}{=} \| \W_{k+1}\cdots \W_1 \|_*
\end{align*}
where $(a)$ follows since the outer product inside the nuclear norm has rank equal to $1$, the inequality is due to the triangle inequality for matrix norms, and $(b)$ follows trivially. In fact, the inequality above holds if each of the summands in $(a)$ are equal to 
\begin{equation}\label{eq:equalized}
\left|{\color{green} \alpha_{j_1,i_1}  } {\color{red} \beta_1  \cdots   \beta_{l-1} } {\color{blue} \gamma_{j_l,i_l} }\right| = \frac{\| \W_{k+1}\cdots \W_1 \|_*}{\Pi_l d_{j_l}}
\end{equation}
for all $i_l,\ldots,i_1 \in [d_{j_l}] \times \cdots \times  [d_{j_1}]$. Each of the sub-regularizers can be lowerbounded by noting that $R_l(\{ \W_i \})=\sum_{ j_l, \ldots, j_1}R_{\{ j_l, \ldots, j_1 \}}(\{ \W_i \})$.

Lemma~\ref{lem:reg_lb} is central to our analysis for two reasons. First, it gives a sufficient and necessary condition for the induced regularizer to equal the square of the nuclear norm of the network map. This also motivates the concept of equalized networks in Definition~\ref{def:equalized}. We note that for the special case of single hidden layer linear networks, i.e., for k=1, this lower bound can always be achieved~\cite{mianjy2018implicit}; it remains to be seen whether the lower bound can be achieved for deeper networks. Second, summing over $\{ j_l, \ldots, j_1 \} \in {[k] \choose l}$, we conclude that 
\begin{equation}\label{eq:lb_def}
R_l(\{ \W_i \}) \geq\! \sum_{ j_l, \ldots, j_1}\! \frac{\| \W_{k+1\to 1} \|_*^2}{\prod_l d_{j_l}} =: LB_l(\{ W_i \}). 
\end{equation}
The right hand side above is the lowerbound for $l$-th subregularizer, denoted by $LB_l$. Summing over $l\in[k]$, we get the following lowerbound on the explicit regularizer\vspace*{-5pt}
\begin{equation}
R(\{ \W_i \})\ \geq\ \ \| \W_{k+1 \to 1} \|_*^2\ \ \underbrace{\sum_{l}\sum_{j_l, \ldots, j_1}\!\!\frac{
\lambda^l}{\prod_l d_{j_l}}}_{\nu_{\{ d_i \}}} \vspace*{-5pt}
\end{equation}
which motivates the notion of \emph{effective regularization parameter} in Definition~\ref{def:flow}.  
As an immediate corollary of Lemma~\ref{lem:reg_lb}, it holds that for any matrix $\M$ we have that $\Theta(\M) \geq \nu_{\{ d_i \}}\| \M \|_*^2$. We now focus on the biconjugate of the induced regularizer, and show that it is upper bounded by the same function, i.e. the effective regularization parameter times the square of the nuclear norm of the network map.
\begin{lemma}\label{lem:fenchel}
For any architecture $\{ d_i \}$ and any network map $\M$, it holds that $\Theta^{**}(\M)\leq \nu_{\{ d_i \}}\| \M \C^\frac12 \|_*^2$.
\end{lemma}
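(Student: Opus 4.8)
The plan is to use the fact that the biconjugate $\Theta^{**}$ coincides with the convex envelope of $\Theta$, together with the convex-hull characterization of the envelope. Since $\Theta$ is bounded below by Lemma~\ref{lem:reg_lb}, for every finite convex combination $\M = \sum_r t_r \M_r$ with $t_r\geq 0$, $\sum_r t_r = 1$, and each $\M_r$ in the domain of $\Theta$ (i.e. $\rank{\M_r}\leq \min_i d_i$), one has $\Theta^{**}(\M)\leq \sum_r t_r \Theta(\M_r)$. Hence it suffices to exhibit a \emph{single} decomposition of $\M$ into rank-one atoms for which the right-hand side evaluates to exactly $\nu_{\{ d_i \}}\| \M\C^{\frac12} \|_*^2$.

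I would build this decomposition from the SVD of the whitened map. Write $\M\C^{\frac12} = \sum_{r=1}^{\rho}\sigma_r\,\u_r\v_r^\top$ with $\rho=\rank{\M\C^{\frac12}}\leq \min_i d_i$, and set $\sigma:=\sum_s \sigma_s = \| \M\C^{\frac12} \|_*$. Choosing weights $t_r := \sigma_r/\sigma$ and rank-one atoms $\M_r := \sigma\,\u_r\v_r^\top\C^{-\frac12}$, one checks immediately that $\sum_r t_r \M_r = \big(\sum_r \sigma_r\,\u_r\v_r^\top\big)\C^{-\frac12} = \M$, that each $\M_r$ is rank one (hence in the domain), and that $\M_r\C^{\frac12} = \sigma\,\u_r\v_r^\top$ has nuclear norm $\sigma$. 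Thus, \emph{provided} each atom satisfies $\Theta(\M_r)\leq \nu_{\{ d_i \}}\sigma^2$, we obtain $\sum_r t_r \Theta(\M_r)\leq \nu_{\{ d_i \}}\sigma^2 \sum_r t_r = \nu_{\{ d_i \}}\| \M\C^{\frac12} \|_*^2$, which is exactly the claimed bound.

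The crux, and the step I expect to be the main obstacle, is therefore to show that a rank-one map $\M_r$ admits an implementation whose explicit regularizer meets the lower bound of Lemma~\ref{lem:reg_lb}, i.e. an \emph{equalized} implementation in the sense of Definition~\ref{def:equalized}. For the atom $\M_r = \sigma\,\u_r\v_r^\top\C^{-\frac12}$ I would construct the weights explicitly as rank-one, uniform-in-magnitude layers: set $\W_1 = c_1\,\frac{\1_{d_1}}{\sqrt{d_1}}(\C^{-\frac12}\v_r)^\top$, $\W_i = c_i\,\frac{\1_{d_i}}{\sqrt{d_i}}\frac{\1_{d_{i-1}}^\top}{\sqrt{d_{i-1}}}$ for $2\leq i\leq k$, and $\W_{k+1} = c_{k+1}\,\u_r\frac{\1_{d_k}^\top}{\sqrt{d_k}}$, with scalars chosen so that $\prod_i c_i = \sigma$. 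Telescoping the all-ones inner products ($\1_{d_i}^\top\1_{d_i}=d_i$) shows $\W_{k+1\to 1} = \sigma\,\u_r\v_r^\top\C^{-\frac12} = \M_r$. The verification that this is equalized is then a direct computation: for any choice of pivot layers $(j_l,\ldots,j_1)$ and pivot nodes, the $\C$-norm in the head term $\alpha_{j_1,i_1}$ contributes $|\prod_{i\leq j_1}c_i|/\sqrt{d_{j_1}}$ (using $(\C^{-\frac12}\v_r)^\top\C(\C^{-\frac12}\v_r)=\|\v_r\|^2=1$), while each middle term $\beta_p$ and the tail term $\gamma_{j_l,i_l}$ contribute the analogous uniform factors; multiplying, all the $c_i$ collapse to $\sigma$ and the dimension factors to $1/\prod_l d_{j_l}$, giving $|\alpha_{j_1,i_1}\beta_1\cdots\beta_{l-1}\gamma_{j_l,i_l}| = \sigma/\prod_l d_{j_l} = \| \M_r\C^{\frac12} \|_*/\prod_l d_{j_l}$ for every summand in Proposition~\ref{prop:reg}, which is exactly the equalization condition.

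Combining the pieces: equalization forces Lemma~\ref{lem:reg_lb} to hold with equality, so $R(\{\W_i\}) = \nu_{\{ d_i \}}\| \M_r\C^{\frac12} \|_*^2 = \nu_{\{ d_i \}}\sigma^2$, whence $\Theta(\M_r)\leq \nu_{\{ d_i \}}\sigma^2$. Substituting into the convex-combination bound from the first paragraph yields $\Theta^{**}(\M)\leq \nu_{\{ d_i \}}\| \M\C^{\frac12} \|_*^2$. The construction is well defined because $\C=\bE[\x\x^\top]$ has full rank (so $\C^{-\frac12}$ exists) and because $d_i\geq 1$ for all $i$ ensures a rank-one map is always implementable; these are the only assumptions beyond the excerpt needed to close the argument.
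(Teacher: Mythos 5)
Your proof is correct, but it takes a genuinely different route from the paper's. The paper proves Lemma~\ref{lem:fenchel} by Fenchel duality: it lower-bounds the conjugate $\Theta^*(\M)=\max_{\{\W_i\}}\langle \M,\W_{k+1\to 1}\rangle - R(\{\W_i\})$ by restricting to a one-parameter family of all-ones, rank-one weights built from the \emph{top} singular pair of $\M\C^{-\frac12}$, maximizes over the scale $\alpha$ to obtain a lower bound on $\Theta^*$ proportional to $\| \M\C^{-\frac12}\|_2^2$, and then conjugates once more (using spectral/nuclear duality and the identity that the conjugate of $\beta\|\cdot\,\A\|_2^2$ is $\frac{1}{4\beta}\|\cdot\,\A^{-1}\|_*^2$) to conclude $\Theta^{**}(\M)\leq \nu_{\{d_i\}}\| \M\C^{\frac12}\|_*^2$. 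You bypass conjugate calculus entirely: using only that $\Theta^{**}$ is convex and minorizes $\Theta$, Jensen's inequality applied to your SVD-based convex combination $\M=\sum_r t_r\M_r$, $t_r=\sigma_r/\sigma$, $\M_r=\sigma\,\u_r\v_r^\top\C^{-\frac12}$, reduces the claim to $\Theta(\M_r)\leq \nu_{\{d_i\}}\sigma^2$ for rank-one atoms, which you settle with the equalized all-ones implementation. That last step is exactly the paper's Proposition~\ref{prop:equalizable} (proved in the appendix with the same weights, but used there only for Theorem~\ref{thm:main_1d}); your argument therefore derives Lemma~\ref{lem:fenchel} as a corollary of Proposition~\ref{prop:equalizable} together with the equality case of Lemma~\ref{lem:reg_lb}, a tidier logical organization that also makes transparent how the envelope is witnessed by splitting the map into rank-one equalized pieces. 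What the paper's route buys in exchange is that it needs only the top singular pair rather than the full SVD, and it produces a lower bound on $\Theta^*$ itself, of possible independent interest; at their core, both arguments rest on the same rank-one all-ones construction. Two small points to patch in your write-up: the degenerate case $\M=0$ needs separate (trivial) handling since $t_r=\sigma_r/\sigma$ is then undefined, and your opening appeal to ``the biconjugate coincides with the convex envelope'' is imprecise in general ($\Theta^{**}$ is the \emph{closed} convex envelope); neither matters, because the only inequality you actually use, $\Theta^{**}\bigl(\sum_r t_r\M_r\bigr)\leq \sum_r t_r\Theta(\M_r)$, follows directly from convexity of $\Theta^{**}$ and $\Theta^{**}\leq\Theta$.
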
\vspace*{-5pt}
To convey the main ideas of the proof, here we include a sketch for the simple case of $k=2$,  $d_1=d_2=d$ under the isotropic assumption ($\C=\I$); for the general case, please refer to the appendix. \vspace*{-5pt} 
\paragraph{Sketch of the proof of Lemma~\ref{lem:fenchel}}
First, the induced regularizer is non-negative, so the domain of $\Theta^*$ is $\R^{d_{k+1} \times d_0}$. The Fenchel dual of the induced regularizer $\Theta(\cdot)$ is given~by:\vspace*{-5pt}
\begin{align}\label{eq:fenchel}
\Theta^*(\M) 
&= \maxim{\P}{\langle \M,\P \rangle - \Theta(\P)}\nonumber\\
&= \maxim{\P}{\langle \M,\P \rangle - \minim{\substack{\W_3,\W_2,\W_1 \\ \W_{3\to 1} = \P}}{R(\W_3,\W_2,\W_1)}}\nonumber\\
&= \maxim{\W_3,\W_2,\W_1}{\langle \M,\W_{3\to 1} \rangle - R(\W_3,\W_2,\W_1)},
\end{align}
Denote the objective in~\eqref{eq:fenchel} by $\Phi(\W_3,\W_2,\W_1):=\langle \M,\W_{3\to 1} \rangle - R(\W_3,\W_2,\W_1)$. Let $(\u_1,\v_1)$ be the top singular vectors of $\M$. For any $\alpha\in \R$, consider the following assignments to the weights: $\W^\alpha_1=\alpha\u_1\1_d^\top$,$\W^\alpha_2=\1_d\1_d^\top$ and $\W^\alpha_3=\1_d\v_1^\top$. Note that $$\maxim{\W_3,\W_2,\W_1}{\Phi(\W_3,\W_2,\W_1)} \geq \maxim{\alpha}{\Phi(\W^\alpha_3,\W^\alpha_2,\W^\alpha_1)}.$$ We can express the objective on the right hand side merely in terms of $\alpha,d$ and $\|\M\|_2$ as follows: 
\begin{align*}
&R(\W^\alpha_{3},\W^\alpha_{2}, \W^\alpha_{1}) = \lambda \sum_{i=1}^{d}{ {\color{green}\| \W^\alpha_{1}(i,:) \|^2} {\color{blue}\| \W^\alpha_{3\to 2}(:,i) \|^2} }\\\vspace*{-5pt}
&+ \lambda \sum_{i=1}^{d}{ {\color{green}\| \W^\alpha_{2\to 1}(i,:) \|^2} {\color{blue}\| \W^\alpha_3(:,i) \|^2} }\\\vspace*{-5pt}
&+ \lambda^2 \sum_{i,j=1}^{d}{ {\color{green}\| \W^\alpha_{1}(i,:) \|^2} {\color{red}\W^\alpha_{2}(j,i)^2} {\color{blue}\| \W^\alpha_3(:,j) \|^2} }\\
&= 2 \lambda \alpha^2 d^3 + \lambda^2 \alpha^2 d^2.
\end{align*}
Similarly, the inner product $\langle \M, \W^\alpha_{3\to 1}\rangle$ reduces to \vspace*{-5pt}
\begin{align*}
\langle \M, \W^\alpha_{3 \to 1}\rangle 
&= \sum_{i,j=1}^{d}{ \langle \M, \W^\alpha_3(:,j) \W^\alpha_{2}(j,i)  \W^\alpha_{1}(i,:)^\top \rangle} \\
&=\sum_{i,j=1}^{d}{ \alpha \u_1^\top\M\v_1} = \alpha d^2 \| \M \|_2.
\end{align*}
Plugging back the above equalities in $\Phi$ we get:
\begin{align*}
\Phi(\W^\alpha_3,\W^\alpha_2,\W^\alpha_1) &= \alpha d^2 \|  \M \|_2 - 2 \lambda \alpha^2 d^3 - \lambda^2 \alpha^2 d^2.
\end{align*}
Maximizing the right hand side above with respect to $\alpha$
\begin{equation*}
\Theta^*(\M) \geq \maxim{\alpha}{\Phi(\W^\alpha_3,\W^\alpha_2,\W^\alpha_1)} = \frac{d^2 }{4(2\lambda d + \lambda^2)} \| \M \|_2^2.
\end{equation*}
 Since Fenchel dual is order reversing, we get 
\begin{align}{\label{eq:biconjugate}}
\Theta^{**}(\M) 
&\leq \frac{2\lambda d + \lambda^2}{d^2} \| \M \|_*^2= \nu_{\{d,d\}} \| \M \|_*^2,
\end{align}
where we used the basic duality between the spectral norm and the nuclear norm. 
Lemma~\ref{lem:reg_lb} implies that the biconjugate $\Theta^{**}(\M)$ is lower bounded by  $\nu_{\{ d,d \}} \| \M \|_*^2$, which is a convex function. On the other hand, inequality~\eqref{eq:biconjugate} shows that the biconjugate is upper bounded $\nu_{\{ d,d \}} \| \M \|_*^2$. Since square of the nuclear norm is a convex function, and that $\Theta^{**}(\cdot)$ is the largest convex function that lower bounds $\Theta(\cdot)$, we conclude that $\Theta^{**}(\M) = \nu_{\{ d,d \}} \| \M \|_*^2$.

\paragraph{Sketch of the proof of Theorem~\ref{thm:main_1d}} 
In light of Theorem~\ref{thm:envelope}, if the optimal network map $\W^*_{k+1\to 1}$, i.e. the optimum of the  problem in Equation~\ref{eq:optim} can be implemented by an equalized network, then $\Theta(\W^*_{k+1\to 1})=\Theta^{**}(\W^*_{k+1\to 1})=\nu_{\{ d_i \}}\|\W^*_{k+1\to 1} \C^\frac12\|_*^2$. Thus, the learning problem essentially boils down to the following convex program:
\begin{equation}\label{eq:related1}
\minim{\W}{\bE_{\x,\y}[\| \y - \W\x \|^2] + \nu_{\{ d_i \}}\| \W\C^\frac12 \|_*^2}.
\end{equation}
Following the previous work of~\cite{Cavazza2017analysis,mianjy2018implicit}, we show that the solution to problem~\eqref{eq:related1} can be given as $\W^* = \cS_{\alpha_\rho}(\C_{\y\x}\C^{-\frac12})\C^{-\frac12}$, where $\alpha_\rho:=\frac{\nu\sum_{i=1}^{\rho}\sigma_i(\C_{\y\x}\C^{-\frac12})}{1+\rho\nu}$, $\rho$ is the rank of $\W^*$, and $\cS_{\alpha_\rho}(\M)$ shrinks the spectrum of the input matrix $\M$ by $\alpha_\rho$ and thresholds them at zero. However, as mentioned above, it is not clear if any network map can be implemented by an equalized network.  
Nonetheless, it is easy to see that the equalization property is satisfied for rank-1 network maps. 
\begin{proposition}\label{prop:equalizable}
Let $\{ d_i \}_{i=0}^{k+1}$ be an architecture and $\M\in \R^{d_{k+1}\times d_0}$ be a rank-1 network map. Then, there exists a set of weights $\{ \W_i \}_{i=1}^{k+1}$ that implements $\M$, and is equalized. 
\end{proposition}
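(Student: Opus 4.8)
The plan is to exhibit an explicit equalized implementation built entirely from rank-one ``all-ones'' layers, after first reducing to the isotropic case. Write the rank-one map as $\M=\sigma\u\v^\top$ with $\|\u\|=\|\v\|=1$ and $\sigma>0$. I would first reduce to $\C=\I$ by solving the problem for $\tilde\M:=\M\C^{\frac12}$ isotropically and then post-multiplying the resulting first layer by $\C^{-\frac12}$, i.e. setting $\W_1=\tilde\W_1\C^{-\frac12}$ so that $\W_{k+1\to 1}=\tilde\W_{k+1\to 1}\C^{-\frac12}=\M$. Since only the head term $\alpha_{j_1,i_1}=\|\W_{j_1\to 1}(i_1,:)\|_\C$ references the first layer, and $\|\C^{-\frac12}\x\|_\C=\|\x\|$, this substitution turns the $\C$-equalization of $\{\W_i\}$ for $\M$ into the ordinary ($\C=\I$) equalization of the modified tuple for $\tilde\M$; the middle terms $\beta_p$ and the tail term $\gamma_{j_l,i_l}$ never touch $\W_1$ and are unchanged, and $\|\M\C^{\frac12}\|_*=\|\tilde\M\|_*$. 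Hence it suffices to treat the isotropic case.

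For the isotropic problem I would set, with positive scalars $c_i$,
\[
\W_1=c_1\,\1_{d_1}\v^\top,\qquad \W_i=c_i\,\1_{d_i}\1_{d_{i-1}}^\top\ (2\le i\le k),\qquad \W_{k+1}=c_{k+1}\,\u\,\1_{d_k}^\top.
\]
A telescoping computation, in which each interior product $\1_{d_i}^\top\1_{d_i}$ contributes a factor $d_i$, gives $\W_{k+1\to 1}=\big(\prod_{i=1}^{k+1}c_i\big)\big(\prod_{i=1}^{k}d_i\big)\,\u\v^\top$, so choosing the $c_i$ with $\prod_i c_i=\sigma/\prod_{i=1}^{k}d_i$ (for instance all equal) implements $\tilde\M$.

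The heart of the argument is that this construction is equalized essentially by symmetry. I would compute the three partial products explicitly: the head row $\W_{j_1\to 1}(i_1,:)=\big(\prod_{i=1}^{j_1}c_i\big)\big(\prod_{i=1}^{j_1-1}d_i\big)\v$, the tail column $\W_{k+1\to j_l+1}(:,i_l)=\big(\prod_{i=j_l+1}^{k+1}c_i\big)\big(\prod_{i=j_l+1}^{k}d_i\big)\u$, and the middle entry $\W_{j_{p+1}\to j_p+1}(i_{p+1},i_p)=\big(\prod_{i=j_p+1}^{j_{p+1}}c_i\big)\big(\prod_{i=j_p+1}^{j_{p+1}-1}d_i\big)$. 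The crucial point is that every vector appearing here is either an all-ones vector or $\u,\v$ (which have unit norm), so each of $\alpha_{j_1,i_1}$, $\beta_p$, and $\gamma_{j_l,i_l}$ is \emph{independent of the pivot-node indices} $i_1,\dots,i_l$. Consequently $|\alpha_{j_1,i_1}\beta_1\cdots\beta_{l-1}\gamma_{j_l,i_l}|$ is constant across all node choices, which is exactly Definition~\ref{def:equalized}. It then remains to identify the constant: multiplying the three expressions, the $c$-factors of the head, the $l-1$ middle segments, and the tail partition $\{1,\dots,k+1\}$ and telescope to $\prod_{i=1}^{k+1}c_i$, while the $d$-factors partition $\{1,\dots,k\}\setminus\{j_1,\dots,j_l\}$ and telescope to $\prod_{i=1}^{k}d_i\big/\prod_{m=1}^{l}d_{j_m}$. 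The product therefore equals $\sigma/\prod_{m=1}^{l}d_{j_m}=\|\tilde\M\|_*/\prod_{m=1}^{l}d_{j_m}$, using $\|\tilde\M\|_*=\sigma$ for a rank-one matrix, which matches the value required by equalization.

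I expect the only real obstacle to be organizational rather than conceptual: keeping the telescoping index ranges straight so that the $c$- and $d$-products over the head, middle segments, and tail partition the two index sets correctly. The genuinely load-bearing idea is the choice of all-ones interior layers, which forces node-independence and hence equalization for free; once that is in place the verification is pure bookkeeping. Note that $\M=0$ is not rank-one, so no degenerate case needs separate handling.
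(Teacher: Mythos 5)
Your proof is correct and is essentially the paper's own argument: the paper exhibits exactly this all-ones construction, taking $\W_1=\1_{d_1}\v^\top/\sqrt{d_1}$, $\W_i=\1_{d_i}\1_{d_{i-1}}^\top/\sqrt{d_i d_{i-1}}$ for $2\le i\le k$, and $\W_{k+1}=\u\1_{d_k}^\top/\sqrt{d_k}$, and verifies equalization by the same telescoping computation of the head, middle, and tail terms, whose node-independence is the same load-bearing observation. The only cosmetic differences are that the paper fixes the scalars explicitly rather than leaving generic $c_i$ subject to a product constraint, and it carries $\C^{\frac12}$ through the head terms directly (using $\|\u\|\,\|\C^{\frac12}\v\|=\|\u\v^\top\C^{\frac12}\|_*$) instead of your preliminary reduction to the isotropic case.
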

For example, for deep networks with single output neuron, the weights $\W_1=\frac{\1_{d_1} \w^\top }{\sqrt d_1}$ and $\W_i = \frac{\1_{d_i}\1_{d_{i-1}}^\top}{\sqrt{d_i d_{i-1}}}$ for $i \neq 1$ implements the network map $\w^\top$, and are equalized.

Denote $\bar\M:=\C_{\y\x}\C^{-\frac12}$. Equipped with Proposition~\ref{prop:equalizable}, the key here is to ensure that $\cS_\alpha(\bar\M)$ has rank equal to one. In this case, $\W^*$ will also have rank at most one and can be implemented by a network that is equalized. To that end, it suffices to have $\alpha \geq \sigma_2(\bar\M)$, which implies
$$\frac{\nu\sigma_1(\bar\M)}{1+\nu} \geq \sigma_2(\bar\M) \implies \sigma_1(\bar\M) - \sigma_2(\bar\M) \geq \frac{\sigma_2(\bar\M)}{\nu}$$
which gives the sufficient condition in Theorem~\ref{thm:main_1d}. 

\section{Experimental Results}\label{sec:exp}
Dropout is widely used for training modern deep learning architectures resulting in the state-of-the-art performance in numerous machine learning tasks~\cite{srivastava2014dropout,krizhevsky2012imagenet,szegedy2015going,toshev2014deeppose}. The purpose of this section is not to make a case for (or against) dropout when training deep networks, but rather verify empirically the theoretical results from the previous section.\footnote{The code for the experiments can be found at: \href{https://github.com/r3831/dln_dropout}{https://github.com/r3831/dln\_dropout}} 

For simplicity, the training data $\{ \x_i \}$ is sampled from a standard Gaussian distribution which in particular ensures that $\C=\I$. The labels $\{ \y_i \}$ are generated as $\y_i \gets \textrm{N}\x_i$, where $\textrm{N}\in \R^{d_{k+1}\times d_0}$.  
$\textrm{N}$ is composed of $\U \V^\top + \texttt{noise}$, where $\U \in \R^{d_{k+1} \times r}$, $\V \in \R^{d_0 \times r}$ are sampled from a standard Gaussian and the entries of $\texttt{noise}$ are sampled independently from a Gaussian distribution with small standard deviation. At each step of the dropout training, we use a minibatch of size $1000$ to train the network. The learning rate is tuned over the set $\{ 1,0.1,0.01 \}$. All experiments are repeated 50 times, the curves correspond to the average of the runs, and the grey region shows the standard deviation.

The experiments are organized as follows. First, since the convex envelope of the induced regularizer equals the squared nuclear norm of the network map (Theorem~\ref{thm:envelope}), it is natural to expect that dropout training performs a shrinkage-thresholding on the spectrum of $\C_{\y\x}\C^{-\frac12}=\M$ (see Lemma~\ref{lem:global} in the appendix). We experimentally verify this in Section~\ref{sec:shrinkage}. Second, in Section~\ref{sec:subregs}, we focus on the equalization property. We attest Theorem~\ref{thm:main_1d} by showing that dropout training equalizes deep networks with one output neuron.

\subsection{Spectral shrinkage and rank control}\label{sec:shrinkage}
\begin{figure}[t]
\centering
\includegraphics[width=0.49\textwidth]{./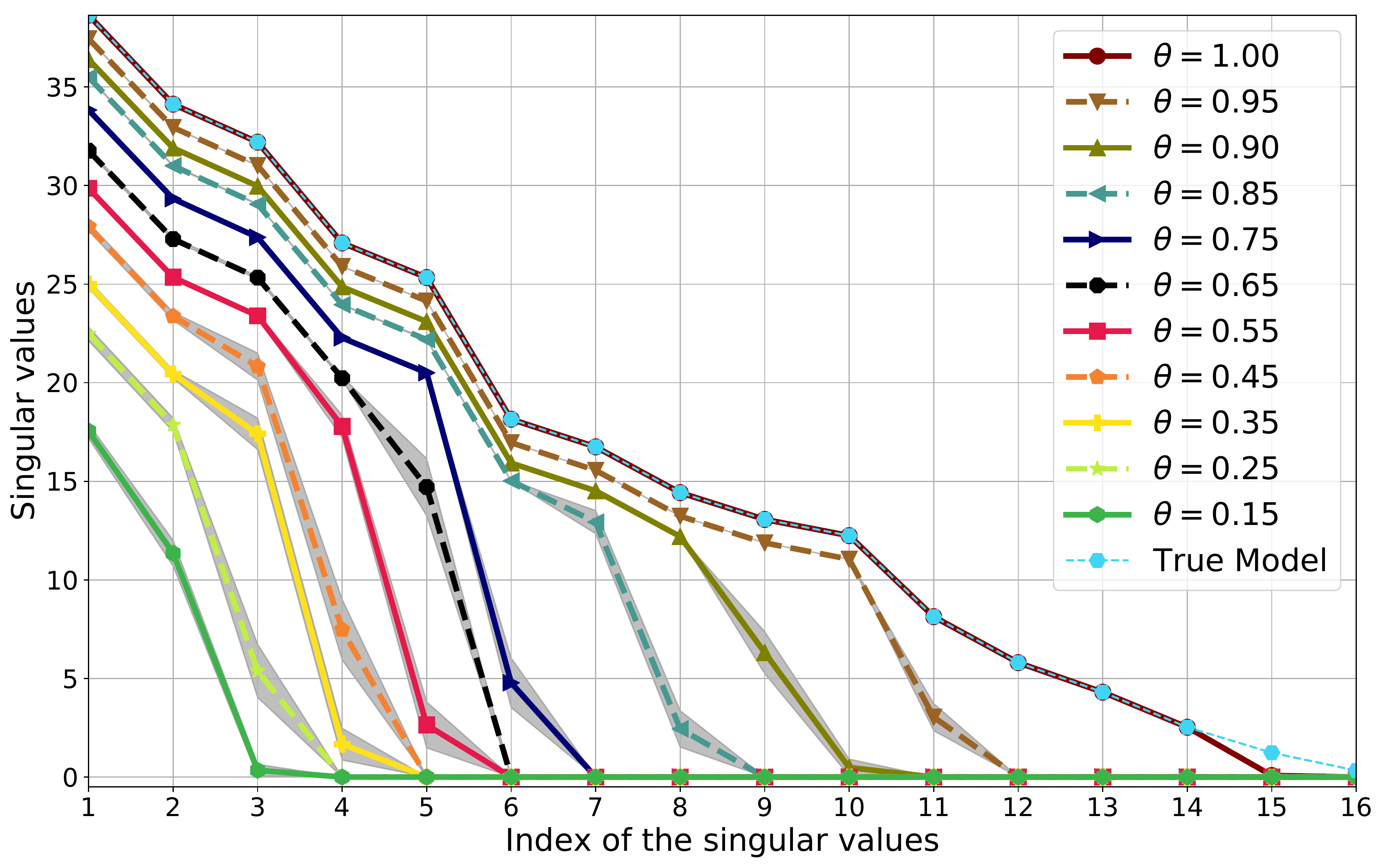}
\caption{ \label{fig:sv}Distribution of the singular values of the trained network for different values of the dropout rate $1-\theta$. It can be seen that the dropout training performs a more sophisticated form of shrinkage and thresholding on the spectrum of the model matrix $\M$.}  
\end{figure}
Note that the induced regularizer $\Theta(\M)$ is a \emph{spectral function} (see Lemma~\ref{lem:spectral} in the appendix). 
On the other hand, by Theorem~\ref{thm:envelope}, $\Theta^{**}(\M)=\nu_{\{ d_i \}}\|\M\|_*^2$. Therefore, if dropout training succeeds in finding an (approximate) minimizer of $L_\theta$, it minimizes an upperbound on the squared of the nuclear norm of the network map. Hence, it is natural to expect that the dropout training performs a shrinkage-thresholding on the spectrum of the model, much like nuclear norm regularization. Figure~\ref{fig:sv} confirms this intuition. Here, we plot the singular value distribution of the final network map trained by dropout, for different values of the dropout rate. 

As can be seen in the figure, dropout training indeed shrinks the spectrum of the model and thresholds it at zero. However, unlike the nuclear norm regularization, the shrinkage is not uniform across the singular values that are not thresholded. Moreover, note that the shrinkage parameter in Theorem~\ref{thm:main_1d} is governed by the effective regularization parameter $\nu_{\{ d_i \}}$, which strictly increases with the dropout rate. This suggests that as we increase the dropout rate (decrease $\theta$), the spectrum should be shrunk more severely, and the resulting network map should have a smaller rank. This is indeed the case as can be seen in Figure~\ref{fig:sv}.\vspace{-5pt}

\subsection{Convergence to equalized networks}\label{sec:subregs}
\begin{figure}
\centering
\includegraphics[width=0.49\textwidth]{./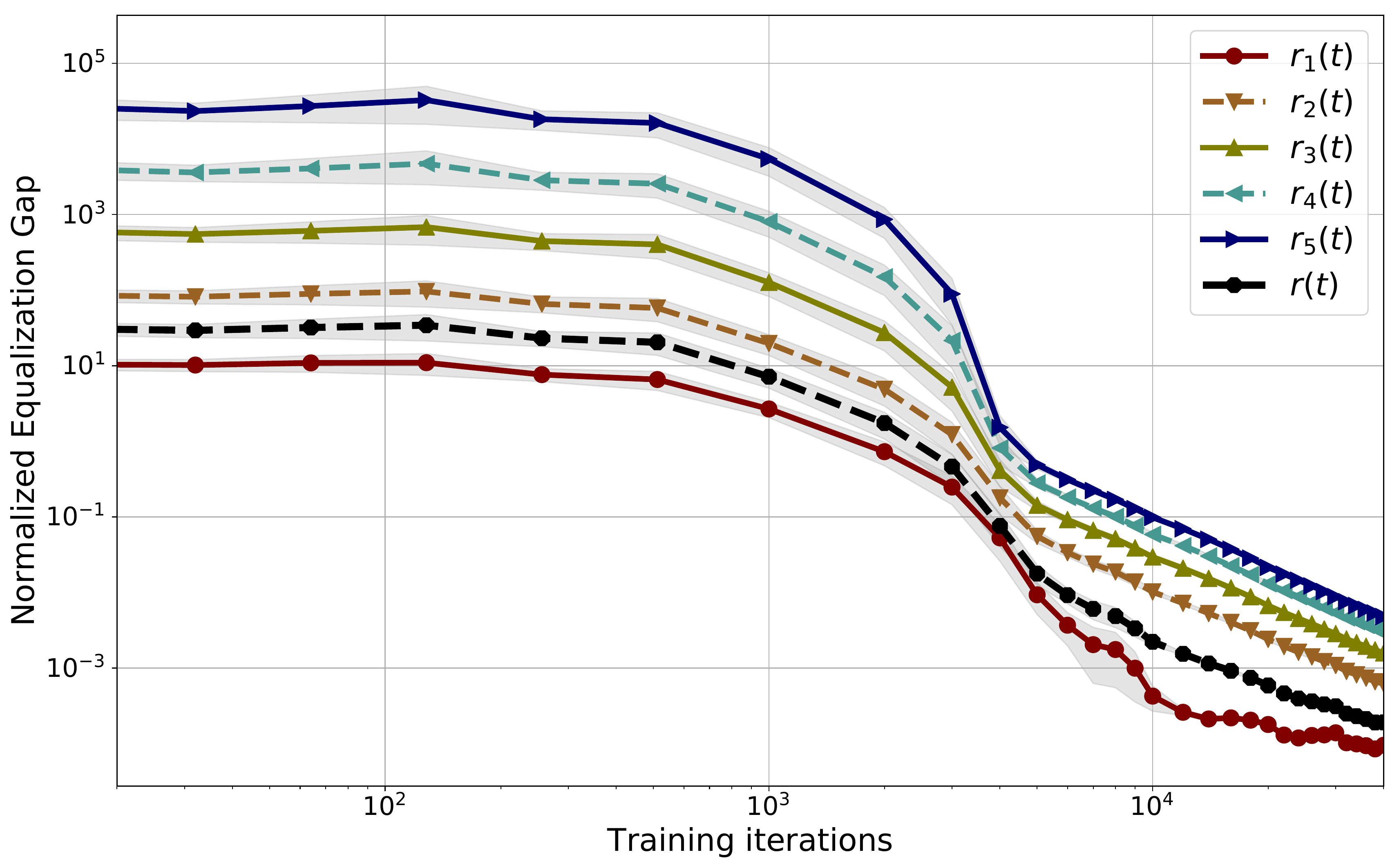}
\caption{ \label{fig:ratio}
The normalized equalization gap $r_\ell^{(t)}$, which captures the gap between the sub-regularizers and their respective lower bounds, is plotted as a function of the number of iterations. Dropout converges to the set of equalized networks. }
\end{figure}

One of the key concepts behind our analysis is the notion of equalized networks. In particular, in Lemma~\ref{lem:reg_lb} we see that if a network map can be implemented by an equalized network, then there is no gap between the induced regularizer and its convex envelope. It is natural to ask if dropout training indeed finds such equalized networks. As we will discuss, Figure~\ref{fig:ratio} answers this question affirmatively.

Recall that a network is equalized if and only if each and every sub-regularizer achieves its respective lowerbound in Equation~\ref{eq:lb_def}, i.e. $R_l(\{ \W_i \})=LB_l(\{ \W_i \})$ for all $l \in [k]$. Figure~\ref{fig:ratio} illustrates that dropout training consistently decreases the gap between the sub-regularizers and their respective lowerbounds. Here, the network has one output neuron, five hidden layers each of width 5, and input dimensionality of $d_0=5$. In Figure~\ref{fig:ratio} we plot the \emph{normalized equalization gap} 
$r_\ell^{(t)} := \frac{R_\ell(\{ \W_i^{(t)} \})}{LB_\ell(\{ \W_i^{(t)} \})}-1$
of the network under dropout training as a function of the iteration number. Similarly, we define the normalized equalization gap for the explicit regularizer $r^{(t)}=\frac{R(\{ \W_i \})}{ \Theta^{**}(\W_{k+1\to 1})}-1$. The network quickly becomes (approximately) equalized, and thereafter the trajectory of dropout training stays close to the equalized networks. We believe that this observation can be helpful in analyzing the dynamics of dropout training, which we leave for future work. 

\section{Discussion}\label{sec:disc}
Motivated by empirical success of dropout~\cite{srivastava2014dropout,krizhevsky2012imagenet}, there has been several studies in recent years to understand its theoretical foundations~\cite{baldi2013understanding,wager2013dropout,wager2014altitude,van2014follow, helmbold2015inductive,gal2016dropout,gao2016dropout,helmbold2017surprising,mou2018dropout,bank2018relationship}.

Previous work of~\citet{zhai2015dropout,he2016dropout,Cavazza2017analysis} and~\citet{mianjy2018implicit} study dropout training with $\ell_2$-loss in matrix factorization and shallow linear networks, respectively. The work that is most relevant to us is that of~\citet{Cavazza2017analysis,mianjy2018implicit}, whose results are extended to the case of deep linear networks in this paper.

In particular, we derive the \emph{explicit regularizer} induced by dropout, which happens to be composed of the  $\ell_2$-path regularizer and other rescaling invariant regularizers. Furthermore, we show that the convex envelope of the induced regularizer factors into an \emph{effective regularization parameter} and the square of the  nuclear norm of network map multiplied with the principal root of the second moment of the input distribution. We further highlight \emph{equalization} as a key network property under which the induced regularizer equals its convex envelope. We specify a subclass of problems satisfying the equalization property, for which we completely characterize the optimal networks that dropout training is biased towards. 

Our work suggests several interesting directions for future research. First, given the connections that we establish with the nuclear norm and the $\ell_2$-path regularization, it is natural to ask what role does the dropout regularizer play in generalization. Second, how does the dropout regularizer change for neural networks with non-linear activation functions. Finally, it is important to understand dropout in networks trained with other loss functions, especially those that are popular for various classification tasks.

\section*{Acknowledgements}
\noindent This research was supported in part by NSF BIGDATA grant IIS-1546482. 


\bibliography{main.bib}
\bibliographystyle{icml2019}

\newpage
\appendix
\onecolumn

\pagenumbering{gobble}

\title{Supplementary Materials for the Paper: \\``On Dropout and Nuclear Norm Regularization''}
\date{}
\maketitle

\section{Proofs of the Main Results}
In this section, we provide the complete proofs of our main results. For notational simplicity, we define 
\begin{align*}
\W_{i \to j} &:=\W_i \W_{i-1}\cdots \W_{j+1} \W_j, \\
\bar\W_{i \to j} &:=\frac1{\theta^{i-j}}\W_i \diag{\b_{i-1}}\W_{i-1}\cdots \diag{\b_{j+1}}\W_{j+1} \diag{\b_{j}}\W_j.
\end{align*}
Since the Bernoulli random vectors are i.i.d., it holds that $\bE_{\b_i}[\bar\W_{i \to j}] = \W_{i \to j}$. A quantity that shows up when analyzing dropout training under squared error is $\bE[\|\U\diag{\b}\V\x \|^2]$, where the expectation is taken with respect to $\b$, which is a Bernoulli random vector with parameter $\theta$. The following lemma gives the closed form of this expectation.
\begin{lemma}\label{lem:linear_norm}
Let $\U\in \R^{d_2\times r}$, $\V\in \R^{d_1\times r}$, and $\C := \bE[\x\x^\top]$. It holds that $$\bE[\|\U\diag{\b}\V\x \|^2]=\theta^2\bE[\|\U\V\x \|^2] + (\theta-\theta^2) \sum_{j=1}^{r}\|\u_{:j}\|^2 \|\C^{\frac12}\v_{j:}\|^2.$$
\end{lemma}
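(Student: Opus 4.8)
The plan is to decompose the masked product into a sum of rank-one matrices indexed by the hidden coordinate that is kept or dropped, expand the squared norm into a double sum, and then evaluate the second moment in two stages: first over the Bernoulli mask $\b \in \R^r$, then over $\x$. First I would write the masked map as $\U\diag{\b}\V = \sum_{j=1}^{r}\b(j)\,\u_{:j}\v_{j:}^\top$, so that
$$\U\diag{\b}\V\x = \sum_{j=1}^{r}\b(j)\,(\v_{j:}^\top\x)\,\u_{:j}.$$
Taking the squared Euclidean norm then produces the bilinear form
$$\|\U\diag{\b}\V\x\|^2 = \sum_{j=1}^{r}\sum_{j'=1}^{r}\b(j)\b(j')\,(\v_{j:}^\top\x)(\v_{j':}^\top\x)\,\u_{:j}^\top\u_{:j'}.$$

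Next I would take the expectation with respect to $\b$. Since the coordinates $\b(j)$ are i.i.d. Bernoulli with parameter $\theta$, we have $\bE[\b(j)^2]=\theta$ and $\bE[\b(j)\b(j')]=\theta^2$ for $j\neq j'$. I would split the double sum into its diagonal part ($j=j'$) and off-diagonal part ($j\neq j'$), and then use the identity $\sum_{j\neq j'} = \sum_{j,j'}-\sum_{j=j'}$ to re-complete the full bilinear form. The reassembled full double sum, carrying coefficient $\theta^2$, collapses back to $\theta^2\|\sum_{j}(\v_{j:}^\top\x)\u_{:j}\|^2 = \theta^2\|\U\V\x\|^2$, while the leftover diagonal correction picks up the coefficient $\theta-\theta^2$:
$$\bE_{\b}[\|\U\diag{\b}\V\x\|^2] = \theta^2\|\U\V\x\|^2 + (\theta-\theta^2)\sum_{j=1}^{r}(\v_{j:}^\top\x)^2\,\|\u_{:j}\|^2.$$

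Finally I would take the expectation over $\x$. Because $\b$ and $\x$ are independent, the expectation passes through both terms; the first term becomes $\theta^2\,\bE[\|\U\V\x\|^2]$, and for the second I would use $\bE_{\x}[(\v_{j:}^\top\x)^2] = \v_{j:}^\top\bE[\x\x^\top]\v_{j:} = \v_{j:}^\top\C\v_{j:} = \|\C^{\frac12}\v_{j:}\|^2$, which yields exactly the claimed identity. There is no genuine analytic obstacle here: the computation is entirely elementary. The only step requiring care is the combinatorial bookkeeping of the Bernoulli second moments, in particular the add-and-subtract manoeuvre that recovers the complete bilinear form $\|\U\V\x\|^2$ from the off-diagonal contribution; getting the diagonal correction coefficient to come out as $\theta-\theta^2$ (rather than, say, $\theta$ or $\theta^2$) is the one place a sign or factor error could slip in.
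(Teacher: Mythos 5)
Your proposal is correct and follows essentially the same route as the paper's proof: expand the masked output into a bilinear form over pairs of hidden indices, apply the Bernoulli second moments $\bE[\b(j)\b(j')]=\theta^2\1_{j\neq j'}+\theta\1_{j=j'}$, regroup into the full double sum (coefficient $\theta^2$) plus a diagonal correction (coefficient $\theta-\theta^2$), and finish with $\bE_\x[(\v_{j:}^\top\x)^2]=\|\C^{\frac12}\v_{j:}\|^2$. The only cosmetic difference is that the paper expands coordinate-wise over the output index $i$ with entries $u_{ij}$, whereas you keep the rank-one matrices $\u_{:j}\v_{j:}^\top$ intact and work with the inner products $\u_{:j}^\top\u_{:j'}$; the computation is identical.
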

The proof can be found in~\cite{Cavazza2017analysis,mianjy2018implicit}. Nonetheless, we provide a proof here for completeness.
\begin{proof}[Proof of Lemma~\ref{lem:linear_norm}]
\begin{align*}
\bE[\|\U\diag{\b}\V\x \|^2]&=\bE \sum_{i=1}^{d_2}{\bE_\b\left(\sum_{j=1}^{r}{u_{ij}b_j\v_{j:}^\top \x}\right)^2} =\bE \sum_{i=1}^{d_2}{\bE_\b[\sum_{j,k=1}^{r}{u_{ij}u_{ik}b_j b_k (\v_{j:}^\top \x) (\v_{k:}^\top\x)}]}\nonumber\\
&=\bE \sum_{i=1}^{d_2}{\sum_{j,k=1}^{r}{u_{ij}u_{ik}(\theta^2 \1_{j\neq k} + \theta \1_{j=k}) (\v_{j:}^\top \x) (\v_{k:}^\top\x)}} =\theta^2\bE[\|\U\V\x \|^2] + (\theta-\theta^2)\bE \sum_{i=1}^{d_2}{\sum_{j=1}^{r}{u_{ij}^2 (\v_{j:}^\top \x)^2}}\nonumber\\
&=\theta^2\bE[\|\U\V\x \|^2] + (\theta-\theta^2)  \sum_{j=1}^{r}\bE[\v_{j:}^\top \x \x^\top \v_{j:}] {\sum_{i=1}^{d_2}{u_{ij}^2}} =\theta^2\bE[\|\U\V\x \|^2] + (\theta-\theta^2) \sum_{j=1}^{r} \| \C^{\frac12}\v_{j:} \|^2 \|\u_{:j}\|^2.
\end{align*}
\end{proof}

\subsection{Properties of the explicit regularizer}
Recall that training a network with dropout aims at minimizing the following \emph{dropout objective} $$L_\theta(\{ \W_{i}\}_{i=1}^{k+1})=\bE_{\substack{\b_i \sim \text{Bern}(\theta) \\ (\x,\y)\sim \cD}}[\|\y - \frac1{\theta^k}\W_{k+1}\diag{\b_k}\W_k \ldots \diag{\b_1}\W_1\x\|^2].$$ In Proposition~\ref{prop:reg} we show that this objective can be decomposed into a summation of the population loss plus an \emph{explicit regularizer}, i.e. $L_\theta(\cdot) = L(\cdot) + R(\cdot)$, and give the closed form expression for the explicit regularizer.
\begin{proof}[Proof of Proposition~\ref{prop:reg}]
We start by expanding the squared error:
\begin{align}\label{eq:der1}
L_\theta(\{ \W_{i}\}_{i=1}^{k+1})&=\bE_{\substack{\b_i \sim \text{Bern}(\theta) \\ (\x,\y)\sim \cD}}[\|\y - \bar\W_{k+1 \to 1} \x\|^2] \nonumber \\
&=\bE[\|\y\|^2] - 2\bE[\langle \bar\W_{k+1 \to 1}\x,\y\rangle] + \bE[\|\bar\W_{k+1 \to 1}\x\|^2]  \nonumber \\
&= \bE[\| \y \|^2] - 2 \bE \langle \W_{k+1\to 1}\x,\y \rangle + \frac1{\theta^{2k}}\bE[\|\W_{k+1}\diag{\b_k}\W_{k} \ldots \diag{\b_1}\W_{1}\x\|^2]  
\end{align}

We now focus on the last term in the right hand side of Equation~\eqref{eq:der1}.
\begin{align}\label{eq:der2}
\bE[\|\W_{k+1}\diag{\b_k}\W_{k} \ldots & \diag{\b_1}\W_{1}\x\|^2] = \bE[\|\bar\W_{k+1\to 2} \diag{\b_1}\W_{1}\x\|^2] \nonumber \\
&=\theta^2\bE[\|\bar\W_{k+1\to 2} \W_{1} \x \|^2] + (\theta-\theta^2) \sum_{j=1}^{d_1} \bE[\| \bar\W_{k+1\to 2}(:,j) \|^2] {\color{green}\| \C^{\frac12} \W_{1}(j,:)\|^2}
\end{align}
The second equality follows from Lemma~\ref{lem:linear_norm}. Similarly, the first term on the right hand side of Equation~\eqref{eq:der2} can be expressed as:
\begin{align*}
\bE[\|\bar\W_{k+1\to 2} \W_{1} \x \|^2] &=\bE[\|\bar\W_{k+1\to 3} \diag{\b_2}\W_{2\to 1}\x \|^2]\\
&=\theta^2\bE[\|\bar\W_{k+1\to 3} \W_{2\to1} \x \|^2] + (\theta-\theta^2) \sum_{j=1}^{d_2} \bE[\| \bar\W_{k+1\to 3}(:,j) \|^2] {\color{green}\|\C^{\frac12} \W_{2\to 1}(j, :)\|^2}
\end{align*}
By recursive application of the above identity and plugging the result into Equation~\eqref{eq:der2}, we obtain: 
\begin{align}
\bE[\|\bar\W_{k+1 \to 1}\x\|^2] = \theta^{2k} \bE[\|\W_{k+1\to 1} \x \|^2] +(1-\theta) \sum_{i=1}^{k}\sum_{j=1}^{d_i}\theta^{2i-1} \bE[\| \bar\W_{k+1\to i+1}(:,j) \|^2] {\color{green}\|\C^{\frac12}\W_{i\to1}(j, :)\|^2}
\end{align}
Plugging back the above equality into Equation~\eqref{eq:der1}, we get 
\begin{align}\label{eq:der3}
L_\theta(\{\W_{i}\}) &=   \| \y \|^2 - 2 \bE \langle W_{k+1\to 1}\x,\y \rangle  + \bE[\|\W_{k+1\to 1} \x\|^2] +\frac{1-\theta}{\theta^{2k}} \sum_{i=1}^{k}\sum_{j=1}^{d_i}\theta^{2i-1}  \bE[\| \bar\W_{k+1\to i+1}(:,j) \|^2] {\color{green}\|\C^{\frac12} \W_{i\to1}(j,:)\|^2 }\nonumber\\
&= \bE_\x[\| \y - \W_{k+1\to 1}\x \|^2] +(1-\theta) \sum_{i=1}^{k}\sum_{j=1}^{d_i}\theta^{2(i-k)-1} \bE[\| \bar\W_{k+1\to i+1}(:,j) \|^2] {\color{green}\|\C^{\frac12} \W_{i\to1}(j,:)\|^2}.
\end{align}
It remains to calculate the terms of the form $\bE[\| \bar\W_{k+1 \to i+1}(:,j) \|^2]$ in the right hand side of Equation~\eqref{eq:der3}. We introduce the variable $x\sim\cN(0,1)$ so that we can use Lemma~\ref{lem:linear_norm} again:
\begin{align}\label{eq:der4}
\bE[\| \bar\W_{k+1\to i+1}(:,j) \|^2] &= \bE[\| \bar\W_{k+1\to i+2}\diag{\b_{i+1}}\W_{i+1}(:,j) x \|^2] \nonumber \\
&=\theta^2\bE[\| \bar\W_{k+1\to i+2}\W_{i+1}(:,j) \|^2]+(\theta-\theta^2) \bE   \sum_{l=1}^{d_{i+1}}{\|\bar\W_{k+1\to i+2}(:,l)\|^2 {\color{red}\W_{i+1}(l,j)^2}}.
\end{align}
The first term on the right hand side of Equation~\eqref{eq:der4} can be expanded as:
\begin{align*}
\bE[\| \bar\W_{k+1\to i+2}\W_{i+1}(:,j) \|^2] 
&=\bE[\| \bar\W_{k+1\to i+3}\diag{\b_{i+2}}\W_{i+2 \to i+1}(:,j) x \|^2]\\
&=\theta^2\bE[\| \bar\W_{k+1\to i+3}\W_{i+2\to i+1}(:,j) \|^2]+(\theta-\theta^2) \bE   \sum_{l=1}^{d_{i+2}}{\|\bar\W_{k+1\to i+3}(:,l)\|^2 {\color{red}\W_{i+2\to i+1}(l,j)^2}} \nonumber
\end{align*}
By recursive application of the above equality and plugging the results into Equation~\eqref{eq:der4}, we get
\begin{align*} 
\bE[\| \bar\W_{k+1\to i+1}(:,j) \|^2]  &=\theta^{2(k-i)}\|\W_{k+1\to i+1}(:,j) \|^2 + (1-\theta) \sum_{m=1}^{k-i}\theta^{2m-1} \bE   \sum_{l=1}^{d_{i+m}}{\|\bar\W_{k+1\to i+1+m}(:,l)\|^2 {\color{red}\W_{i+m\to i+1}(l,j)^2}}
\end{align*}
Plugging back the above identity into Equation~\eqref{eq:der3} we get
\begin{align*}
R(\{\W_{i}\}) &=  (1-\theta) \sum_{i=1}^{k}\sum_{j=1}^{d_i}\theta^{2(i-k)-1} {\color{green}\|\C^{\frac12} \W_{i\to1}(j,:)\|^2} \bE[\| \bar\W_{k+1\to i+1}(:,j) \|^2]\\
&=  \frac{1-\theta}{\theta} \sum_{i=1}^{k}\sum_{j=1}^{d_{i}} {\color{green}\| \C^{\frac12}\W_{i\to1}(j,:)\|^2} {\color{blue}\|\W_{k+1\to i+1}(:,j) \|^2} \\
&+(\frac{1-\theta}{\theta})^2 \sum_{i=1}^{k}\sum_{j=1}^{d_{i}} {\color{green}\|\C^{\frac12} \W_{i\to1}(j,:)\|^2}  \sum_{m=1}^{k-i}\theta^{2(i+m-k)} \bE   \sum_{l=1}^{d_{i+m}}{\color{red}\W_{i+m\to i+1}(l,j)^2}\|\bar\W_{k+1\to i+m+1}(:,l)\|^2 \\ 
&=  \frac{1-\theta}{\theta} \sum_{i=1}^{k}\sum_{j=1}^{d_{i}} {\color{green}\|\C^{\frac12}\W_{i\to1}(j,:)\|^2} {\color{blue}\|\W_{k+1\to i+1}(:,j) \|^2}\\
&+(\frac{1-\theta}{\theta})^2 \sum_{i=1}^{k}\sum_{j=1}^{d_i} {\color{green}\|\C^{\frac12}\W_{i\to1}(j,:)\|^2}  \sum_{m=1}^{k-i}  \sum_{l=1}^{d_{i+m}}{\color{red}\W_{i+m\to i+1}(l,j)^2}{\color{blue}\|\W_{k+1\to i+m+1}(:,l)\|^2} \\ 
&+(\frac{1-\theta}{\theta})^3 \sum_{i=1}^{k}\sum_{j=1}^{d_i} {\color{green}\|\C^{\frac12}\W_{i\to1}(j,:)\|^2}  \sum_{m=1}^{k-i}   \sum_{l=1}^{d_{i+m}}{\color{red}\W_{i+m\to i+1}(l,j)^2}   \left( \sum_{mm=1}^{k-i-m}\theta^{2(i+m+mm)}  \right. \\
& \left.  \sum_{ll=1}^{d_{i+m+mm}}{\color{red}\W_{i+m+mm\to i+m+1}(ll,l)^2} \bE\|\bar\W_{k+1\to i+1+m+mm}(:,ll)\|^2 \right)  \\
&= \cdots \\
&= \sum_{l=1}^k \lambda^l \sum_{\substack{ (j_l,\ldots , j_1) \\ \in {[k] \choose l} }}\sum_{\substack{(i_l,\ldots,i_1) \\ \in [d_{j_l}]\times\cdots \times [d_{j_1}]}} {\color{green} \| \C^{\frac12} \W_{j_1 \to 1}(i_1,:) \|^2 } {\color{red} \prod_{p=1\cdots l-1} \W_{j_{p+1}  \to j_{p}+1}(i_{p+1},i_p)^2 } {\color{blue} \| \W_{k+1 \to j_l+1}(:,i_l)\|^2},
\end{align*}
which completes the proof.
\end{proof}

\begin{lemma}\label{lem:spectral}[Properties of $R$ and $\Theta$] The following statements hold true:
\begin{enumerate}
\item All sub-regularizers, and hence the explicit regularizer, are re-scaling invariant.
\item The infimum in Equation~\eqref{eq:induced_reg} is always attained.
\item If $\C=\I$, then $\Theta(\M)$ is a \emph{spectral function}, i.e. if $\M$ and $\M'$ have the same singular values, then $\Theta(\M) = \Theta(\M')$.
\end{enumerate}
\end{lemma}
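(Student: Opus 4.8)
The plan is to prove the three statements separately, and to dispatch claims~2 and~3 after the standard reduction to the isotropic case: substituting $\W_1 \gets \W_1\C^{\frac12}$ turns the constraint $\W_{k+1}\cdots\W_1 = \M$ into $\W_{k+1}\cdots\W_1 = \M\C^{\frac12}$ and replaces every occurrence of $\|\cdot\|_\C$ by $\|\cdot\|$ in Proposition~\ref{prop:reg}, so that $\Theta_\C(\M) = \Theta_\I(\M\C^{\frac12})$. It therefore suffices to work with $\C = \I$ for the last two claims.

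For claim~1, I would argue term-by-term in the expansion of Proposition~\ref{prop:reg}. Under the rescaling $\W_i \gets \alpha_i\W_i$ with $\prod_i \alpha_i = 1$, the head factor $\alpha_{j_1,i_1} = \|\W_{j_1\to 1}(i_1,:)\|_\C$ picks up $\prod_{i=1}^{j_1}\alpha_i$, each middle factor $\beta_p = \W_{j_{p+1}\to j_p+1}(i_{p+1},i_p)$ picks up $\prod_{i=j_p+1}^{j_{p+1}}\alpha_i$, and the tail factor $\gamma_{j_l,i_l} = \|\W_{k+1\to j_l+1}(:,i_l)\|$ picks up $\prod_{i=j_l+1}^{k+1}\alpha_i$. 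Because $j_1 < \cdots < j_l$, these index ranges partition $\{1,\ldots,k+1\}$, so a single summand $\alpha_{j_1,i_1}^2\bigl(\prod_{p}\beta_p^2\bigr)\gamma_{j_l,i_l}^2$ scales by $\bigl(\prod_{i=1}^{k+1}\alpha_i\bigr)^2 = 1$. Since this holds for every summand, each $R_l$, and hence $R$, is rescaling invariant.

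For claim~3 the key observation is that, when $\C = \I$, $R$ is invariant under orthogonal transformations of the two \emph{outer} factors. If $U,V$ are orthogonal and I set $\W_{k+1} \gets U\W_{k+1}$ and $\W_1 \gets \W_1 V^\top$, then every head norm $\|\W_{j_1\to 1}(i_1,:)\|$ is preserved (its row is right-multiplied by $V^\top$), every tail norm $\|\W_{k+1\to j_l+1}(:,i_l)\|$ is preserved (its column is left-multiplied by $U$), and the middle factors are untouched; meanwhile the network map becomes $U\M V^\top$. Transforming a factorization of $\M$ this way gives a factorization of $U\M V^\top$ with identical $R$, so $\Theta(U\M V^\top) \leq \Theta(\M)$, and applying the inverse transformation yields equality. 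Since any $\M,\M'$ with the same singular values satisfy $\M' = U\M V^\top$ for some orthogonal $U,V$, we conclude $\Theta(\M) = \Theta(\M')$, i.e. $\Theta$ is spectral.

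For claim~2 I would take a minimizing sequence $\{\W_i^{(n)}\}$ on the fiber $\{\W_{k+1}\cdots\W_1 = \M\}$ with $R(\{\W_i^{(n)}\}) \to \Theta(\M)$, normalize it using claim~1 so that $\|\W_1^{(n)}\|_F = \cdots = \|\W_{k+1}^{(n)}\|_F$, extract a convergent subsequence, and conclude by continuity of $R$ and closedness of the constraint set. The main obstacle is precisely this compactness step: the sublevel sets of $R$ on the fiber are \emph{not} bounded even after balancing, because $R$ admits degenerate directions beyond scalar rescaling along which the factors blow up while $R$ stays bounded. The resolution I would pursue is to quotient out all of these degeneracies at once, combining the scalar rescaling of claim~1 with the orthogonal normalization of claim~3 to place each factor into a canonical (e.g. diagonal) form, and then show that $R$ is coercive in the remaining non-degenerate coordinates, so that a minimizing sequence can always be chosen inside a compact set. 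I expect this coercivity-modulo-invariances analysis to be the technical heart of the lemma.
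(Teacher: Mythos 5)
Your arguments for claims 1 and 3 are correct and essentially coincide with the paper's. Claim 1 is the same telescoping computation (the ranges $[1,j_1],\ (j_1,j_2],\ \ldots,\ (j_l,k+1]$ partition $[k+1]$, so every summand scales by $(\prod_i \alpha_i)^2=1$), and claim 3 is the same two-sided orthogonal-invariance argument, $R(\U^\top \W_{k+1},\W_k,\ldots,\W_1\V)=R(\W_{k+1},\ldots,\W_1)$. One difference works in your favor: the paper proves claim 3 by picking \emph{minimizers} of the two infima, so its claim 3 leans on claim 2, whereas your version transforms arbitrary factorizations and compares infima directly, so it needs no attainment. Your preliminary reduction $\Theta_{\C}(\M)=\Theta_{\I}(\M\C^{\frac12})$ via $\W_1\gets \W_1\C^{\frac12}$ is also valid.

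Claim 2 is where your proposal has a genuine gap, and you say so yourself: you correctly identify the obstruction (sublevel sets of $R$ on the fiber $\{\W_{k+1}\cdots\W_1=\M\}$ are unbounded, e.g.\ a huge row of $\W_1$ paired with a zero column of $\W_2$ costs nothing), and then defer the ``coercivity modulo invariances'' step, which is the entire content of the claim. For comparison, the paper's proof is much blunter: it asserts that ``without loss of generality'' the infimum may be taken over $\mathcal{F}=\{\{\W_i\}:\W_{k+1}\cdots\W_1=\M,\ \|\W_i\|_F\le\|\M\|_F\}$, which is compact, and invokes Weierstrass. But your worry is well-founded and in fact applies to the paper itself: scalar rescaling (claim 1) only balances the Frobenius norms to the common value $(\prod_j\|\W_j\|_F)^{1/(k+1)}$, and $\prod_j\|\W_j\|_F$ can blow up along a minimizing sequence precisely in your degenerate directions, so it is not bounded by $\|\M\|_F$; worse, submultiplicativity gives $\|\M\|_F\le\prod_i\|\W_i\|_F\le\|\M\|_F^{k+1}$ on $\mathcal{F}$, so $\mathcal{F}$ is actually \emph{empty} whenever $0<\|\M\|_F<1$. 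So neither your sketch nor the paper's argument proves attainment as written. A concrete repair along the lines you propose: the invariance group of $R$ is larger than scalar rescaling --- replacing $\W_{i+1}\gets\W_{i+1}\D_i^{-1}$, $\W_i\gets\D_i\W_i$ for positive diagonal $\D_i$ preserves \emph{every} summand in Proposition~\ref{prop:reg}, because the diagonal entries telescope along each head--middle--tail chain exactly as in claim 1. For $k=1$ this already finishes the job: balancing each hidden unit so that $\|\W_1(i,:)\|=\|\W_2(:,i)\|$ gives $\|\W_1\|_F^2=\sum_i\|\W_1(i,:)\|\,\|\W_2(:,i)\|\le\sqrt{d_1R/\lambda}$, confining a minimizing sequence to a compact subset of the fiber. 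For general $k$ the same balancing must be pushed through layer by layer, with care for units whose upstream or downstream composites vanish; that analysis is the technical heart you predicted, and it is absent from both your proposal and the paper.
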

\begin{proof}[Proof of Lemma~\ref{lem:spectral}]
First, it is easy to see that the explicit regularizer and the sub-regularizers are all \emph{rescaling invariant}. For any sequence of scalars $\{\alpha_i\}$ such that such that $\prod_{i=1}^{k+1}\alpha_i=1$, let $\bar\W_i:=\alpha_i \W_i$ . Then it holds that:
\begin{align*}
R_l(\{ \bar\W_i\}) \!&=\!\!\! \sum_{\substack{ (j_l,\ldots , j_1) \\ \in {[k] \choose l} }}\!\!\sum_{\substack{(i_l,\ldots,i_1) \\ \in [d_{j_l}]\times\cdots \times [d_{j_1}]}} \!\!{\color{green} \| \prod_{q=1}^{j_1}\!\!\alpha_{q} \W_{j_1 \to 1}(i_1,:) \|^2 } \!\!{\color{red} \prod_{p\in [l-1]} \prod_{q=j_p + 1}^{j_{p+1}}\!\!\alpha_q^2\W_{j_{p+1}  \to j_{p}+1}(i_{p+1},i_p)^2 } {\color{blue} \| \!\!\prod_{q=j_l + 1}^{k+1}\!\!\alpha_q\W_{k+1 \to j_l+1}(:,i_l)\|^2} \\
&= \prod_{q=1}^{k+1}\alpha_q^2 \sum_{\substack{ (j_l,\ldots , j_1) \\ \in {[k] \choose l} }}\sum_{\substack{(i_l,\ldots,i_1) \\ \in [d_{j_l}]\times\cdots \times [d_{j_1}]}} {\color{green} \| \W_{j_1 \to 1}(i_1,:) \|^2 } {\color{red} \prod_{p\in[l-1]} \W_{j_{p+1}  \to j_{p}+1}(i_{p+1},i_p)^2 } {\color{blue} \| \W_{k+1 \to j_l+1}(:,i_l)\|^2} \\
&=R_l(\{\W_i\})
\end{align*}

 Therefore, without loss of generality, we can express the induced regularizer as follows:
\begin{align}\label{eq:induced_reg_redef}
\Theta(\M) := \infim{\substack{\W_{k+1}\cdots\W_1 = \M \\ \| \W_i \|_F \leq \| \M \|_F}}{R(\{ \W_i \})}
\end{align}
Note that $R(\{ \W_i \})$ is a continuous function and the feasible set $\mathcal{F}:=\{ (\W_i)_{i=1}^{k+1}: \W_{k+1}\cdots\W_1 = \M , \ \| \W_i \|_F \leq \| \M \|_F \}$ is compact. Hence, by Weierstrass extreme value theorem, the infimum is attained.

Now let $\U\in \R^{d_{k+1}\times d_{k+1}}$ and $\V\in \R^{d_{0} \times d_{0}}$ be a pair of rotation matrices, i.e. $\U^\top\U=\U\U^\top=\I$ and $\V^\top\V=\V\V^\top=\I$. When the data is isotropic, i.e. $\C=\I$, the following equalities hold
\begin{align*}
R(\{ \W_i\}) &= \sum_{l=1}^k \lambda^l \sum_{\substack{ (j_l,\ldots , j_1) \\ \in {[k] \choose l} }}\sum_{\substack{(i_l,\ldots,i_1) \\ \in [d_{j_l}]\times\cdots \times [d_{j_1}]}} {\color{green} \| \W_{j_1 \to 1}(i_1,:) \|^2 } {\color{red} \prod_{p=1\cdots l-1} \W_{j_{p+1}  \to j_{p}+1}(i_{p+1},i_p)^2 } {\color{blue} \| \W_{k+1 \to j_l+1}(:,i_l)\|^2} \\
&= \sum_{l=1}^k \lambda^l \sum_{\substack{ (j_l,\ldots , j_1) \\ \in {[k] \choose l} }}\sum_{\substack{(i_l,\ldots,i_1) \\ \in [d_{j_l}]\times\cdots \times [d_{j_1}]}} {\color{green} \| \W_{j_1 \to 1}(i_1,:)^\top \V \|^2 } {\color{red} \prod_{p=1\cdots l-1} \W_{j_{p+1}  \to j_{p}+1}(i_{p+1},i_p)^2 } {\color{blue} \| \U^\top \W_{k+1 \to j_l+1}(:,i_l)\|^2} \\
&= R(\U^\top\W_{k+1},\W_k,\cdots,\W_2,\W_1\V)
\end{align*}
That is, $R(\U^\top\W_{k+1},\W_k,\ldots,\W_2,\W_1\V) = R(\W_{k+1},\W_k,\ldots,\W_2,\W_1)$ for all rotation matrices $\U$ and $\V$. In particular, let $\U,\V$ be the left and right singular vectors of $\M$, i.e. $\M=\U\Sigma\V^\top$. To prove that $\Theta$ is a spectral function, we need to show that $\Theta(\M)=\Theta(\Sigma)$. Let $\{ \bar\W_i \},\{ \tilde\W_i \}$ be such that $\Theta(\M)=R(\{ \bar\W_i \}),\Theta(\Sigma)=R(\{ \tilde\W_i \})$. Note that such weight matrices always exist since the infimum is always attained. Then $$\Theta(\Sigma) = \Theta(\U^\top\M\V) \leq R(\U^\top\bar\W_{k+1},\bar\W_k,\ldots,\bar\W_2,\bar\W_1\V) = R(\bar\W_{k+1},\bar\W_k,\ldots,\bar\W_2,\bar\W_1) = \Theta(\M).$$ Similarly, we have that $\Theta(\M) \leq R(\U^\top\tilde\W_{k+1},\tilde\W_k,\ldots,\tilde\W_2,\bar\W_1\V) = R(\tilde\W_{k+1},\bar\W_k,\ldots,\tilde\W_2,\tilde\W_1) = \Theta(\Sigma)$, which completes the proof.
\end{proof}

\subsection{The induced regularizer and its convex envelope}
\begin{proof}[Proof of Theorem~\ref{thm:envelope}]
By Lemma~\ref{lem:reg_lb}, for any architecture, any dropout rate, and any set of weights $\{ \W_i \}$ that implements a network map $\W_{k+1\to 1}$, the explicit regularizer is lower bounded by the effective regularization parameter times the product of the squared nuclear norm of the network map and the principal squared root of the second moment of $\x$, i.e.  $R(\{ \W_i \}) \geq \nu_{\{ d_i \}}\| \W_{k+1\to 1}  \C^{\frac12}  \|_*^2$. Consequently, the induced regularizer can also be lowerbounded as $\Theta(\M)\geq \nu_{\{ d_i \}}\| \M  \C^{\frac12}  \|_*^2$. On the other hand, Lemma~\ref{lem:fenchel} establishes that $\Theta^{**}(\M)\leq \nu_{\{ d_i \}}\| \M \C^{\frac12}  \|_*^2$ holds for any network map $\M$. Putting these two inequalities together, we arrive at $$\Theta^{**}(\M)\leq \nu_{\{ d_i \}}\| \M \C^{\frac12}  \|_*^2 \leq \Theta(\M).$$ Since $\Theta^{**}(\M)$ is the largest convex underestimator of $\Theta(\M)$, and the squared nuclear norm is a convex function, we conclude that $\Theta^{**}(\M) = \nu_{\{ d_i \}}\| \M \|_*^2$.
\end{proof}

Despite the complex form of the explicit regularizer given in Proposition~\ref{prop:reg}, we can show that it is always lower bounded by \emph{effective regularization parameter} times $\| \M \C^\frac12\|_*^2$. This result is given by Lemma~\ref{lem:reg_lb}.
\begin{proof}[Proof of Lemma~\ref{lem:reg_lb}]
Recall that the explicit regularizer $R(\{ \W_i \})$ is composed of $k$ \emph{sub-regularizers} $$R(\{ \W_i \}) = R_1(\{ \W_i \}) + R_2(\{ \W_i \}) + \cdots + R_k(\{ \W_i \}).$$ The $l$-th sub-regularizer $R_l(\{\W_i\})$ can be written in the form of:
$$R_l(\{ \W_i \})=\lambda^l \sum_{( j_l, \ldots, j_1 ) \in {[k] \choose l}}R_{\{ j_l, \ldots, j_1 \}}(\{ \W_i \})$$
where $$R_{\{ j_l, \ldots, j_1 \}}(\{ \W_i \}) := {\color{green} \| \C^{\frac12} \W_{j_1 \to 1}(i_1,:) \|^2 } {\color{red} \prod_{p=1\cdots l-1} \W_{j_{p+1}  \to j_{p}+1}(i_{p+1},i_p)^2 } {\color{blue} \| \W_{k+1 \to j_l+1}(:,i_l)\|^2}.$$ The following set of equalities hold true:
\begin{align*}
&R_{\{ j_l, \ldots, j_1 \}}(\{ \W_i \})=\sum_{i_l,\cdots,i_1  } {\color{blue}\| \W_{k+1\to j_l+1}(:,i_l) \|^2}  {\color{red}\W_{j_{l}\to j_{l-1}+1}(i_l,i_{l-1})^2 \cdots \W_{j_{2}\to j_{1}+1}(i_2,i_{1})^2}  {\color{green}\|  \C^{\frac12} \W_{j_1 \to 1}(i_1,:) \|^2} \\
&\geq \frac{1}{\prod_{i\in[l]} d_{j_i}} \left( \sum_{i_l,\ldots,i_1} {\color{blue}\| \W_{k+1\to j_l+1}(:,i_l) \|} {\color{red} | \W_{j_{l}\to j_{l-1}+1}(i_l,i_{l-1})| \cdots |\W_{j_{2}\to j_{1}+1}(i_2,i_{1})|}  {\color{green}\|  \C^{\frac12} \W_{j_1 \to 1}(i_1,:) \|} \right)^2\\
&= \frac{1}{\prod_{i\in[l]} d_{j_i}} \left( \sum_{i_l,\ldots,i_1} \| {\color{blue}\W_{k+1\to j_l+1}(:,i_l)} {\color{red}\W_{j_{l}\to j_{l-1}+1}(i_l,i_{l-1}) \cdots \W_{j_{2}\to j_{1}+1}(i_2,i_{1})} {\color{green}\W_{j_1 \to 1}(i_1,:)^\top  \C^{\frac12}} \|_* \right)^2\\
&\geq \frac{1}{\prod_{i\in[l]} d_{j_i}}   \| \sum_{i_l,\ldots,i_1} {\color{blue}\W_{k+1\to j_l+1}(:,i_l)} {\color{red}\W_{j_{l}\to j_{l-1}+1}(i_l,i_{l-1}) \cdots \W_{j_{2}\to j_{1}+1}(i_2,i_{1})} {\color{green}\W_{j_1 \to 1}(i_1,:)^\top  \C^{\frac12} } \|_*^2\\
&= \frac{1}{\prod_{i\in[l]} d_{j_i}}   \| \sum_{i_l,i_1} {\color{blue}\W_{k+1\to j_l+1}(:,i_l)}  \left( \sum_{i_{l-1},\ldots,i_2} {\color{red}\W_{j_{l}\to j_{l-1}+1}(i_l,i_{l-1}) \cdots \W_{j_{2}\to j_{1}+1}(i_2,i_{1})} \right) {\color{green}\W_{j_1 \to 1}(i_1,:)^\top \C^{\frac12}}  \|_*^2\\
&= \frac{1}{\prod_{i\in[l]} d_{j_i}}   \| \sum_{i_l,i_1} {\color{blue}\W_{k+1\to j_l}(:,i_l)} {\color{red}\W_{j_{l}-1\to j_{1}}(i_l,i_{1})} {\color{green}\W_{j_1 \to 1}(i_1,:)^\top \C^{\frac12} } \|_*^2\\
&= \frac{1}{\prod_{i\in[l]} d_{j_i}}   \| \W_{k+1}\cdots \W_1 \C^{\frac12}  \|_*^2
\end{align*}
where the first inequality follows due to the Cauchy-Schwartz inequality, and the second inequality follows from the triangle inequality for the matrix norms. The inequality holds with equality if and only if all the summands inside the summation are equal to each other, and sum up to $\frac{\| \W_{k+1\to 1}  \C^{\frac12} \|_*}{\prod_{i \in [l]}d_{j_i}} $, i.e. when $${\color{blue}\| \W_{k+1\to j_l+1}(:,i_l) \|} {\color{red} | \W_{j_{l}\to j_{l-1}+1}(i_l,i_{l-1})| \cdots |\W_{j_{2}\to j_{1}+1}(i_2,i_{1})|} {\color{green}\|  \C^{\frac12} \W_{j_1 \to 1}(i_1,:) \|} = \frac{1}{\prod_{i \in [l]}d_{j_i}} \| \W_{k+1\to 1}  \C^{\frac12} \|_*$$ for all $(i_l,\ldots,i_1) \in [d_{j_l}] \times \cdots \times [d_{j_1}]$. This lowerbound holds for all $l\in [k]$, and for all $(j_l,\ldots,j_1)\in {[k] \choose l}$. Thus, we get the following lowerbound on the regularizer:
\begin{align*}
R(\{ \W_i \}) \geq \sum_{l\in [k]}\lambda^l\sum_{ (j_l,\ldots , j_1) \in {[k] \choose l}}\frac{1}{\prod_{i\in [l]} d_{j_i}} \| \W_{k+1\to 1} \C^{\frac12}  \|_*^2 = \nu_{\{\d_i\}} \| \W_{k+1\to 1}  \C^{\frac12}  \|_*^2
\end{align*}
which completes the proof. 
\end{proof}

Not only $\nu_{\{ d_i \}}\| \M\C^\frac12 \|_*^2$ is a lowerbound for the induced regularizer, but also is an upperbound for its convex envelope. We prove this result in  Lemma~\ref{lem:fenchel}.
\begin{proof}[Proof of Lemma~\ref{lem:fenchel}]
The induced regularizer is non-negative. Hence, the domain of the Fenchel dual of the induced regularizer is the whole $\R^{d_{k+1}\times d_0}$. The Fenchel dual of the induced regularizer $\Theta(\cdot)$ is given by:
\begin{align} \label{eq:fenchel_deep}
\Theta^*(\M) &= \maxim{\P}{\langle \M,\P \rangle - \Theta(\P)} \nonumber\\
&= \maxim{\P}{\langle \M,\P \rangle - \minim{\substack{\{ \W_i \} \\ \W_{k+1\to 1} = \P}}{R(\{ \W_i \})}}\nonumber\\
&= \maxim{\{\W_i\}}{\langle \M,\W_{k+1 \to 1} \rangle - R(\{ \W_i \})}.
\end{align}
Define $\Phi(\{ \W_i \}) := \langle \M,\W_{k+1 \to 1} \rangle - R(\{ \W_i \})$ as the objective in the right hand side of Equation~\eqref{eq:fenchel_deep}. Due to the complicated products of the norms of the weights in the regularizer, maximizing $\Phi$ with respect to $\{ \W_i \}$ is a daunting task. Here, we find a lower bound on this maximum value. Let $\W^\alpha_{k+1}:=\alpha\u_1\1_{d_{k}}^\top$ and $\W^\alpha_1:=\1_{d_1}\v_1^\top \C^{-\frac12}$, where $(\u_1,\v_1)$ is the top singular vectors of $\M \C^{-\frac12}$, and $\1_d$ is the $d$-dimensional vector of all $1$s. Furthermore, let $\W^\alpha_i := \1_{d_{i}}\1_{d_{i-1}}^\top$, for all $i \in \{ 2,\ldots,k \}$. 
Note that $$\Theta^*(\M) = \maxim{\{\W_i\}}{\Phi(\{ \W_i\})} \geq \maxim{\alpha}{\Phi(\{ \W^\alpha_i \})}.$$ We now simplify $\Phi(\{ \W^\alpha_i \})$. First, the following equalities hold for the $\langle \M, \W^\alpha_{k+1 \to 1}\rangle$:
\begin{align*}
\langle \M, \W^\alpha_{k+1 \to 1}\rangle &=\sum_{(i_{k+1},\ldots,i_1) \in [d_{k+1}]\times \cdots \times [d_1]} \langle \M, {\color{blue}  \W^\alpha_{k+1}(:,i_{k}) } {\color{red} \prod_{j=\{k-1,\ldots, 1\}} \W^\alpha_{j+1}(i_{j+1},i_j) } {\color{green}  \W^\alpha_{1}(i_1,:)^\top \rangle} \\
&=\sum_{(i_{k+1},\ldots,i_1) \in [d_{k+1}]\times \cdots \times [d_1]}  {\color{blue}  \W^\alpha_{k+1}(:,i_{k})^\top }\M {\color{green}  \W^\alpha_{1}(i_1,:) } \\
&=\sum_{(i_{k+1},\ldots,i_1) \in [d_{k+1}]\times \cdots \times [d_1]}  \alpha \u_1^\top \M  \C^{-\frac12} \v_1 \\
&= \sum_{(i_{k+1},\ldots,i_1) \in [d_{k+1}]\times \cdots \times [d_1]} \alpha \| \M  \C^{-\frac12} \|_2 \\
&= \alpha \|  \M \C^{-\frac12} \|_2 \prod_{j\in[k]}d_j =: \alpha \| \M  \C^{-\frac12}\|_2 D.
\end{align*}
The following terms show up in the expansion of the regularizer:
\begin{align*}
{\color{green} \W^\alpha_{j_1 \to 1}(i_1,:)^\top } &= \W^\alpha_{j_1}(i_1,:)\W^\alpha_{j_1 -1} \cdots \W^\alpha_2 \W^\alpha_1 = \1_{d_{j_1-1}}^\top \1_{d_{j_1-1}} \1_{d_{j_1-2}}^\top \cdots \1_{d_2}\1_{d_1}^\top \1_{d_1}\v_1^\top = \prod_{i\in[j_1 -1]}d_i \v_1^\top \C^{-\frac12}  \\
{\color{red} \W^\alpha_{j_{p+1}  \to j_{p}+1}(i_{p+1},i_p) }  &= \W^\alpha_{j_{p+1}}(i_{p+1},:)\W^\alpha_{j_{p+1}-1} \cdots \W^\alpha_{j_p + 2} \W^\alpha_{j_p + 1}(:,i_p)\\
&= \1_{d_{j_{p+1}-1}}^\top \1_{d_{j_{p+1}-1}} \1_{d_{j_{p+1}-2}}^\top \cdots \1_{d_{j_p + 2}}\1_{d_{j_p + 1}}^\top \1_{d_{j_p + 1}} = \prod_{i\in \{ j_p + 1, \cdots, j_{p+1}-1 \} }d_i  \\
{\color{blue} \W^\alpha_{k+1 \to j_l+1}(:,i_l) } &=  \alpha\W^\alpha_{k+1}\W^\alpha_{k} \cdots \W^\alpha_{j_l +2} \W^\alpha_{j_l +1}(:,i_l) = \alpha \u_1 \1_{d_{k}}^\top \1_{d_{k}} \1_{d_{k-1}}^\top \cdots \1_{d_{j_l+2}}\1_{d_{j_l + 1}}^\top \1_{d_{j_l + 1}} = \alpha\prod_{i\in \{j_l + 1, \cdots, k\}}d_i \u_1
\end{align*}
With the above equalities, the explicit regularizer reduces to:
\begin{align*}
R(\{\W^\alpha_i\}) &= \sum_{l=1}^k \lambda^l \sum_{\substack{ (j_l,\ldots , j_1) \\ \in {[k] \choose l} }}\sum_{\substack{(i_l,\ldots,i_1) \\ \in [d_{j_l}]\times\cdots \times [d_{j_1}]}} {\color{green} \|  \C^{\frac12} \W^\alpha_{j_1 \to 1}(i_1,:) \|^2 } {\color{red} \prod_{p=1\cdots l-1} \W^\alpha_{j_{p+1}  \to j_{p}+1}(i_{p+1},i_p)^2 } {\color{blue} \| \W^\alpha_{k+1 \to j_l+1}(:,i_l)\|^2}\\
&= \sum_{l=1}^k \lambda^l \sum_{\substack{ (j_l,\ldots , j_1) \\ \in {[k] \choose l} }}\sum_{\substack{(i_l,\ldots,i_1) \\ \in [d_{j_l}]\times\cdots \times [d_{j_1}]}} {\color{green} \|  \C^{\frac12} \C^{-\frac12} \v_1 \prod_{i\in [j_1 - 1]} d_i \|^2 } {\color{red} \prod_{p=1\cdots l-1} \prod_{i \in \{ j_p+1,\cdots,j_{p+1}-1 \}}d_i^2  } {\color{blue} \| \alpha \u_1 \prod_{i \in \{ j_l +1 ,\cdots, k \}}d_i \|^2}\\
&= \sum_{l=1}^k \lambda^l \sum_{\substack{ (j_l,\ldots , j_1) \\ \in {[k] \choose l} }}\sum_{\substack{(i_l,\ldots,i_1) \\ \in [d_{j_l}]\times\cdots \times [d_{j_1}]}} {\color{green} \prod_{i \in [j_1 - 1]}d_i^2 } {\color{red} \prod_{p=1\cdots l-1} \prod_{i \in \{ j_p+1,\cdots,j_{p+1}-1 \}}d_i^2 } {\color{blue}\alpha^2 \prod_{i\in \{ j_l+1,\cdots, k \}}d_i^2}\\
&= \alpha^2 \sum_{l=1}^k \lambda^l \sum_{ (j_l,\ldots , j_1)  \in {[k] \choose l} }\sum_{(i_l,\ldots,i_1) \in [d_{j_l}]\times\cdots \times [d_{j_1}]}   \frac{\prod_{i \in [k]}d_i^2}{\prod_{i \in [l]}d_{j_i}^2}=: \alpha^2 \rho 
\end{align*}
Plugging back the above equalities into the definition of $\Phi$, we arrive at $\Phi(\{ \W^\alpha_i \}) = \alpha \|  \M \C^{-\frac12} \|_2 D - \alpha^2 \rho$. The maximum of $\Phi(\{ \W^\alpha_i \})$ with respect to $\alpha$ is achieved when $\alpha^*  =  \frac{\| \M  \C^{-\frac12} \|_2 D}{2\rho}$, in which case we have $$\Theta^*(\M) \geq \Phi(\{ \W^{\alpha^*}_i \}) = \frac{D^2}{4\rho}\| \M \C^{-\frac12}  \|_2^2 =: \Psi(\M).$$
Since Fenchel dual is order reversing, we get 
\begin{align*}
\Theta^{**}(\M) &\leq \Psi^*(\M) \nonumber\\
&= \frac{\rho}{D^2} \| \M  \C^{\frac12} \|_*^2 \\
&= \frac{\sum_{l=1}^k \lambda^l \sum_{ (j_l,\ldots , j_1)  \in {[k] \choose l} }\sum_{(i_l,\ldots,i_1) \in [d_{j_l}]\times\cdots \times [d_{j_1}]}   \frac{\prod_{i \in [k]}d_i^2}{\prod_{i \in [l]}d_{j_i}^2}}{\prod_{j\in[k]}d_j^2} \| \M  \C^{\frac12} \|_*^2 \\
&= \sum_{l=1}^k \lambda^l \sum_{ (j_l,\ldots , j_1)  \in {[k] \choose l} }\sum_{(i_l,\ldots,i_1) \in [d_{j_l}]\times\cdots \times [d_{j_1}]}   \frac{1}{\prod_{i \in [l]}d_{j_i}^2} \| \M \C^{\frac12} \|_*^2\\
&= \sum_{l=1}^k \lambda^l \sum_{ (j_l,\ldots , j_1)  \in {[k] \choose l} } \frac{1}{\prod_{i \in [l]}d_{j_i}} \| \M \C^{\frac12} \|_*^2\\
&= \nu_{\{d_i \}} \| \M  \C^{\frac12} \|_*^2
\end{align*}
where the first equality follows from the fact that if $f(\M)=\beta \| \M \A \|^2$  and  $\A \succ \0$ then $f^*(\M)= \frac1{4\beta} \|\M\A^{-1}  \|_*^2$. This result is standard in the literature, but we prove it here for completeness. Note that
\begin{align*}
\langle \Y, \M \rangle - \beta \| \Y \A \|^2 &= \langle \Y\A, \M\A^{-1} \rangle - \beta \| \Y \A \|^2 \\
 &\leq \| \Y \A \| \| \M \A^{-1}\|_* - \beta \| \Y \A \|^2
\end{align*} 
where the inequality is due to Holder's identity. The right hand side above is a quadratic in $\| \Y\A \|$ and is maximized when $\| \Y\A \| = \frac1{2\beta} \| \M\A^{-1} \|_*$, in which case we have
$$f^*(\M) = \sup_{\Y} \langle \Y, \M \rangle - \beta \| \Y \A \|^2 = \frac1{2\beta} \| \M\A^{-1} \|_* \| \M \A^{-1}\|_* - \beta (\frac1{2\beta} \| \M\A^{-1} \|_*)^2 = \frac1{4\beta} \| \M\A^{-1} \|_*^2.$$
\end{proof}

\subsection{Characterization of the global optima of the dropout objective}
\begin{proof}[Proof of Proposition~\ref{prop:equalizable}]
When the network map has rank equal to one, it can be expressed as $\u\v^\top$, where $\u\in \R^{d_{k+1}}$ and $\v\in \R^{d_0}$. We show that for any architecture $\{ d_i \}$ and any network mapping $\u\v^\top \in \R^{d_{k+1} \times d_0}$, it is always possible to represent $\u\v^\top=\W_{k+1}\cdots\W_1$ such that the resulting network is equalized. One such factorization is when $\W_1=\frac{\1_{d_1} \v^\top }{\sqrt d_1}$, $\W_{k+1} = \frac{\u\1_{d_k}^\top}{\sqrt{d_k}}$, and $\W_i = \frac{\1_{d_i}\1_{d_{i-1}}^\top}{\sqrt{d_i d_{i-1}}}$ for $i\in \{2,\ldots, k \}$. For these weight parameters, we have that
\begin{align*}
{\color{green} \W_{j_1 \to 1}(i_1,:)^\top } &= \W_{j_1}(i_1,:)^\top\W_{j_1 -1} \cdots \W_2 \W_1 = \frac{\1_{d_{j_1-1}}^\top}{\sqrt{d_{j_1} d_{j_1-1}}} \frac{\1_{d_{j_1-1}} \1_{d_{j_1-2}}^\top}{\sqrt{d_{j_1-1} d_{j_1-2}}}  \cdots \frac{\1_{d_2}\1_{d_1}^\top}{\sqrt{d_2 d_1}}  \frac{\1_{d_1} \v^\top }{\sqrt{d_1}} = \frac{\v^\top }{\sqrt{d_{j_1}}} \\
{\color{red} \W_{j_{p+1}  \to j_{p}+1}(i_{p+1},i_p) }  &= \W_{j_{p+1}}(i_{p+1},:)^\top \W_{j_{p+1}-1} \cdots \W_{j_p + 2} \W_{j_p + 1}(:,i_p)\\
&= \frac{\1_{d_{j_{p+1}-1}}^\top}{\sqrt{d_{j_{p+1}} d_{j_{p+1}-1}}} \frac{ \1_{d_{j_{p+1}-1}} \1_{d_{j_{p+1}-2}}^\top}{\sqrt{d_{j_{p+1}-1} d_{j_{p+1}-2}}} \cdots \frac{\1_{d_{j_p + 2}}\1_{d_{j_p + 1}}^\top}{\sqrt{d_{j_p+2} d_{j_p+1}}} \frac{\1_{d_{j_p + 1}}}{\sqrt{d_{j_p+1} d_{j_p}}}  = \frac1{\sqrt{d_{j_{p+1}} d_{j_p}}}  \\
{\color{blue} \W_{k+1 \to j_l+1}(:,i_l) } &= \W_{k+1}\W_{k} \cdots \W_{j_l +2} \W_{j_l +1}(:,i_l)\\
& =  \frac{\u\1_{d_k}^\top}{\sqrt{d_k}} \frac{\1_{d_k}\1_{d_{k-1}}^\top}{\sqrt{d_k d_{k-1}}}  \cdots \frac{\1_{d_{j_l+2}}\1_{d_{j_l + 1}}^\top}{\sqrt{d_{j_l+2} d_{j_l + 1}}} \frac{\1_{d_{j_l + 1}}}{\sqrt{d_{j_l+1} d_{j_l}}} = \frac{\u}{\sqrt{ d_{j_l}}}
\end{align*}
With the above equalities, the regularizer reduces to:
\begin{align*}
R(\{\W_i\}) &= \sum_{l=1}^k \lambda^l \sum_{\substack{ (j_l,\ldots , j_1) \\ \in {[k] \choose l} }}\sum_{\substack{(i_l,\ldots,i_1) \\ \in [d_{j_l}]\times\cdots \times [d_{j_1}]}} {\color{green} \|  \C^{\frac12} \W_{j_1 \to 1}(i_1,:) \|^2 } {\color{red} \prod_{p=1\cdots l-1} \W_{j_{p+1}  \to j_{p}+1}(i_{p+1},i_p)^2 } {\color{blue} \| \W_{k+1 \to j_l+1}(:,i_l)\|^2}\\
 &= \sum_{l=1}^k \lambda^l \sum_{\substack{ (j_l,\ldots , j_1) \\ \in {[k] \choose l} }}\sum_{\substack{(i_l,\ldots,i_1) \\ \in [d_{j_l}]\times\cdots \times [d_{j_1}]}} {\color{green} \|  \C^{\frac12} \frac{\v}{\sqrt{d_{j_1}}} \|^2 } {\color{red} \prod_{p=1\cdots l-1} \frac1{ d_{j_{p+1}} d_{j_p} } } {\color{blue} \| \frac{\u}{\sqrt{ d_{j_l}}} \|^2}\\
 &= \sum_{l=1}^k \lambda^l \sum_{\substack{ (j_l,\ldots , j_1) \\ \in {[k] \choose l} }}\sum_{\substack{(i_l,\ldots,i_1) \\ \in [d_{j_l}]\times\cdots \times [d_{j_1}]}}  \frac{\|  \C^{\frac12} \v \|^2 \| \u \|^2}{\prod_{p \in [l]} d_{j_p}^2}     \\
 &= \sum_{l=1}^k \sum_{\substack{ (j_l,\ldots , j_1) \\ \in {[k] \choose l} }}  \frac{ \lambda^l}{\prod_{p \in [l]} d_{j_p}} \|  \u\v^\top\C^{\frac12} \|_*^2 = \nu_{\{ d_i \}} \|\u\v^\top\C^{\frac12} \|_*^2
\end{align*}
where we used the fact that $\|  \u\|  \|\C^{\frac12} \v \| = \|  \u\v^\top\C^{\frac12} \|_*$.
Moreover, note that the network specified by the above weight matrices is equalized, since $${\color{green} |\alpha_{j_1,i_1}| } {\color{red} \prod_{p=1\cdots l-1} |\beta_p| } {\color{blue} |\gamma_{j_l,i_l}|} = \sqrt{  \frac{\|  \u\v^\top\C^{\frac12} \|_*^2}{\prod_{p \in [l]} d_{j_p}^2}} =  \frac{1}{\prod_{p \in [l]} d_{j_p}} \|  \u\v^\top \C^{\frac12} \|_*.$$
\end{proof}

\begin{lemma}\label{lem:inverse}
For any integer $r$, and for any $\nu \in \R_+$, it holds that $$(\I_r + \nu \1_r\1_r^\top)^{-1}=\I_r - \frac{\nu}{1+ r \nu}\1_r\1_r^\top.$$
\end{lemma}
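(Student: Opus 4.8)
The plan is to verify the claimed identity directly, since establishing that a matrix $B$ is the inverse of a square matrix $A$ reduces to checking $AB = \I_r$. I would set $A := \I_r + \nu\1_r\1_r^\top$ and $B := \I_r - \frac{\nu}{1+r\nu}\1_r\1_r^\top$, and expand the product $AB$.

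First, I would distribute to obtain four terms. The only nontrivial one is the product $\1_r\1_r^\top\1_r\1_r^\top$, which collapses via the scalar identity $\1_r^\top\1_r = r$ to $r\,\1_r\1_r^\top$. After this substitution every term is a scalar multiple of either $\I_r$ or the rank-one matrix $\1_r\1_r^\top$.

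Next, I would collect coefficients. The identity part contributes exactly $\I_r$, while the coefficient multiplying $\1_r\1_r^\top$ is $-\frac{\nu}{1+r\nu} + \nu - \frac{r\nu^2}{1+r\nu}$. Placing these over the common denominator $1+r\nu$, the numerator becomes $-\nu + \nu(1+r\nu) - r\nu^2 = 0$, so the rank-one contribution vanishes and $AB = \I_r$, as desired.

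There is no genuine obstacle here; the only point requiring care is the bookkeeping of the scalar coefficient, which cancels precisely because $1+r\nu$ was chosen as the denominator in $B$. Equivalently, one could read the result off the Sherman--Morrison formula with $u = \1_r$ and $v = \nu\1_r$, noting $v^\top u = r\nu$; the direct computation above is simply an unrolling of that formula and needs no hypothesis beyond $1+r\nu \neq 0$, which holds for every $\nu \in \R_+$.
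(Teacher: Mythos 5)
Your proposal is correct and follows essentially the same route as the paper: both verify the identity by expanding $(\I_r + \nu\1_r\1_r^\top)\bigl(\I_r - \frac{\nu}{1+r\nu}\1_r\1_r^\top\bigr)$, collapsing $\1_r\1_r^\top\1_r\1_r^\top$ to $r\,\1_r\1_r^\top$, and checking that the rank-one coefficient cancels. The paper likewise remarks that the lemma is an instance of the Woodbury (Sherman--Morrison) identity but gives the direct computation for completeness, exactly as you do.
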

Lemma~\ref{lem:inverse} is an instance of the Woodbury's  matrix identity. Here, we include a proof for completeness. 
\begin{proof}[Proof of Lemma~\ref{lem:inverse}]
The proof simply follows from the following set of equations.
\begin{align*}
(\I_r + \nu \1_r\1_r^\top)(\I_r - \frac{\nu}{1+ r \nu}\1_r\1_r^\top) &=\I_r + \nu\1_r\1_r^\top  - \frac{\nu}{1+ r\nu}\1_r\1_r^\top  - \frac{\nu^2}{1+ r\nu}\1_r\1_r^\top \1_r\1_r^\top\\
&=\I_r + \left( \nu  - \frac{\nu}{1+ r\nu}  - \frac{\nu^2 r}{1+ r\nu} \right) \1_r\1_r^\top=\I_r
\end{align*}
\end{proof}

\begin{lemma}\label{lem:global}
Consider the following optimization problem where the induced regularizer in Problem~\ref{eq:optim} is replaced with its convex envelope:
\begin{equation}\label{eq:optim_convex}
\minim{\W\in\R^{d_{k+1}\times d_0}}{\bE[\| \y-\W\x \|^2] + \Theta^{**}(\W)}, \quad \rank{\W} \leq \min_{i\in[k+1]}d_i =: r
\end{equation}
Define the ``model'' $\bar\M:=\C_{\y\x}\C^{-\frac12}$. The global optimum of problem~\ref{eq:optim_convex} is given as $\M^* = \cS_{\alpha_\rho}(\bar\M)\C^{-\frac12}$, where $\alpha_\rho:=\frac{{\nu_{\{ d_i \}}}\sum_{j=1}^{\rho}\sigma_j(\bar\M)}{1+\rho{\nu_{\{ d_i \}}}}$, and $\rho \in [\min\{ r, \rank{\bar\M}\}]$ is the largest integer such that for all $i\in [\rho]$, it holds that $\sigma_i(\bar\M) > \alpha_\rho$.
\end{lemma}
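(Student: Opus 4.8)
The plan is to invoke Theorem~\ref{thm:envelope} to substitute $\Theta^{**}(\W)=\nu\|\W\C^{\frac12}\|_*^2$ and then reduce Problem~\eqref{eq:optim_convex} to a penalized spectral approximation problem. First I would expand the population risk as $\bE[\|\y-\W\x\|^2]=\bE[\|\y\|^2]-2\langle\W,\C_{\y\x}\rangle+\|\W\C^{\frac12}\|_F^2$ and perform the change of variables $\U:=\W\C^{\frac12}$. Since $\C\succ\0$ this map is a rank-preserving bijection, and $\langle\W,\C_{\y\x}\rangle=\langle\U,\bar\M\rangle$. Dropping the constant $\bE[\|\y\|^2]$ and completing the square, the objective becomes $\|\U-\bar\M\|_F^2+\nu\|\U\|_*^2$ up to the additive constant $-\|\bar\M\|_F^2$, so the problem is equivalent to minimizing $\|\U-\bar\M\|_F^2+\nu\|\U\|_*^2$ subject to $\rank{\U}\le r$; it then suffices to report $\W^*=\U^*\C^{-\frac12}$.

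Next I would exploit that both penalty terms are unitarily invariant, depending only on the singular values of $\U$. By von Neumann's trace inequality, $\langle\U,\bar\M\rangle\le\sum_i\sigma_i(\U)\sigma_i(\bar\M)$ with equality precisely when $\U$ and $\bar\M$ share a simultaneous singular value decomposition with matched ordering, equivalently $\|\U-\bar\M\|_F^2\ge\sum_i(\sigma_i(\U)-\sigma_i(\bar\M))^2$. Hence any minimizer shares singular vectors with $\bar\M$, and writing $s_i:=\sigma_i(\bar\M)$ the problem collapses to the scalar program of minimizing $\sum_{i=1}^{r}(\sigma_i-s_i)^2+\nu\big(\sum_{i=1}^{r}\sigma_i\big)^2$ over $\sigma_1,\ldots,\sigma_r\ge0$, where the $r$ retained directions are those of the top-$r$ singular values of $\bar\M$ and the discarded tail contributes only an additive constant.

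I would then solve this convex scalar program through its KKT conditions. Setting $t:=\sum_i\sigma_i$ and $\alpha:=\nu t$, stationarity in each active coordinate gives $\sigma_i=s_i-\alpha$, while the nonnegativity constraints force $\sigma_i=(s_i-\alpha)_+$, i.e. soft-thresholding of the spectrum at level $\alpha$. Imposing self-consistency when exactly $\rho$ coordinates are active yields $t(1+\rho\nu)=\sum_{j=1}^{\rho}s_j$, hence $\alpha=\alpha_\rho=\frac{\nu\sum_{j=1}^{\rho}\sigma_j(\bar\M)}{1+\rho\nu}$, matching the claimed formula and giving $\U^*=\cS_{\alpha_\rho}(\bar\M)$ and therefore $\W^*=\cS_{\alpha_\rho}(\bar\M)\C^{-\frac12}$.

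The main obstacle I anticipate is handling the rank constraint jointly with the determination of $\rho$. Because $\{\rank{\U}\le r\}$ is nonconvex, I must justify that restricting to the top-$r$ singular directions of $\bar\M$ is without loss of optimality, which follows from the alignment argument but requires care when the $s_i$ have ties or when $\rank{\bar\M}<r$. I then need $\rho$ to be well-defined: I would verify that $\rho\mapsto\alpha_\rho$ interacts monotonically with the thresholds $s_\rho$, so that the prescription ``$\rho$ is the largest integer with $\sigma_i(\bar\M)>\alpha_\rho$ for all $i\in[\rho]$'' selects a unique, consistent active set obeying $\rho\le\min\{r,\rank{\bar\M}\}$. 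Finally I would confirm that the resulting $\sigma$ is feasible, with $\sigma_{\rho+1}=\cdots=0$, and is the global minimizer by convexity of the reduced scalar objective.
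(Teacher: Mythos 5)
Your proposal is correct and takes essentially the same route as the paper's own proof: the identical change of variables $\bar\W \gets \W\C^{\frac12}$ reducing the objective to $\|\bar\W - \bar\M\|_F^2 + \nu\|\bar\W\|_*^2$, the same von Neumann trace-inequality argument to align singular vectors and collapse to a scalar program over nonnegative singular values, and the same KKT analysis yielding uniform shrinkage by $\alpha_\rho$ with dual feasibility handling the thresholded tail. The only cosmetic difference is that you recover $\alpha_\rho$ by summing the stationarity equations (a self-consistency condition on $t=\sum_i\sigma_i$), whereas the paper solves the same linear system $(\I_\rho + \nu\1_\rho\1_\rho^\top)\bar\sigma_{1:\rho} = \sigma_{1:\rho}(\bar\M)$ explicitly via the Sherman--Morrison/Woodbury identity (Lemma~\ref{lem:inverse}); the computations coincide.
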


\begin{proof}[Proof of Lemma~\ref{lem:global}]
Denote the objective in the optimization problem~\eqref{eq:optim_convex} as $\cE_{\nu_{\{ d_i \}}}(\W):=\bE[\| \y-\W\x \|^2] + {\nu_{\{ d_i \}}} \| \W\C^\frac12 \|_*^2$. Let $\C_\y:=\bE[\y\y^\top]$ and $\C_{\x\y}:=\bE[\x\y^\top]$. Note that 
\begin{align*}
\minim{\rank{\W}\leq r}{\cE_{\nu_{\{ d_i \}}}(\W)}&= \minim{\rank{\W}\leq r}{\bE[\| \y\|^2] + \bE[\|\W\x \|^2] - 2\bE[\langle \y,\W\x \rangle] + {\nu_{\{ d_i \}}} \| \W\C^\frac12 \|_*^2}\\
&\equiv \minim{\rank{\W}\leq r}{ \trace{\bE[\W\x \x^\top \W^\top]} - 2\trace{\bE[\W\x\y^\top]} + {\nu_{\{ d_i \}}} \| \W\C^\frac12 \|_*^2}\\
&= \minim{\rank{\W}\leq r}{ \trace{\W\C\W^\top} - 2\trace{\W\C_{\x\y}} + {\nu_{\{ d_i \}}} \| \W\C^\frac12 \|_*^2}
\end{align*}
Make the change of variable $\bar\W\gets \W\C^\frac12$ and denote $\bar\M:=\C_{\y\x}\C^{-\frac12}$, the goal is to solve the following problem 
\begin{equation}\label{eq:sym_equalized}
 \minim{\rank{\W}\leq r}{ \trace{\bar\W\bar\W^\top} - 2\langle \bar\W, \bar\M\rangle + {\nu_{\{ d_i \}}} \| \bar\W \|_*^2}\equiv \minim{\rank{\bar\W}\leq r}{ \| \bar\M - \bar\W \|_F^2 + {\nu_{\{ d_i \}}} \| \bar\W \|_*^2}
\end{equation}
If $\bar\W$ is a solution to the above problem, then a solution to the original problem in Equation~\eqref{eq:optim_convex} is given as $\bar\W\C^{-\frac12}$. Following~\cite{Cavazza2017analysis,mianjy2018implicit}, we show that the global optimum of Problem~\ref{eq:sym_equalized} is given in terms of an appropriate shrinkage-thresholding on the spectrum of $\bar\M$. 
Define $r':=\max\{\rank{\bar\M},r\}$. Let $\bar\M=\U_{\bar\M} \Sigma_{\bar\M} \V_{\bar\M}^\top$ and $\bar\W =\U_{\bar\W} \Sigma_{\bar\W} \V_{\bar\W}^\top$ be rank-$r'$ SVDs of $\bar\M$ and $\bar\W$ respectively, such that $\sigma_i(\bar\M)\geq \sigma_{i+1}(\bar\M)$ and $\sigma_{i}(\bar\W) \geq \sigma_{i+1}(\bar\W)$ for all $i\in[r'-1]$. Rewriting objective of Problem~\ref{eq:sym_equalized} in terms of these decompositions gives: 
\begin{align*}
&\| \bar\M - \bar\W \|_F^2 + {\nu_{\{ d_i \}}} \|\bar\W\|_*^2 =\| \U_{\bar\M} \Sigma_{\bar\M} \V_{\bar\M}^\top -  \U_{\bar\W} \Sigma_{\bar\W} \V_{\bar\W}^\top \|_F^2 + {\nu_{\{ d_i \}}} \| \U_{\bar\W} \Sigma_{\bar\W} \V_{\bar\W}^\top \|_*^2\\
&=\| \Sigma_{\bar\M}  -  \U' \Sigma_{\bar\W} \V'^\top \|_F^2 + {\nu_{\{ d_i \}}}\|\Sigma_{\bar\W}\|_*^2\\
&=\| \Sigma_{\bar\M} \|_F^2 + \| \Sigma_{\bar\W} \|_F^2 - 2\langle \Sigma_{\bar\M}, \U'\Sigma_{\bar\W} \V'^\top \rangle + {\nu_{\{ d_i \}}}\|\Sigma_{\bar\W}\|_*^2
\end{align*}
where $\U'=\U_{\bar\M}^\top \U_{\bar\W}$ and $\V'=\V_{\bar\M}^\top \V_{\bar\W}$. By Von Neumann's trace inequality, for a fixed $\Sigma_{\bar\W}$ we have that $$\langle \Sigma_{\bar\M}, \U'\Sigma_{\bar\W} \V'^\top \rangle \leq  \sum_{i=1}^{r'}{\sigma_i(\bar\M)\sigma_i(\bar\W)},$$ where the equality is achieved when $\U_{\bar\M}=\U_{\bar\W}$ and $\V_{\bar\M}=\V_{\bar\W}$. Hence, problem~\ref{eq:sym_equalized} is reduced to 
\begin{align*}
&\minim{\substack{\|\Sigma_{\bar\W}\|_0\leq r,\\ \Sigma_{\bar\W} \geq 0}}{\| \Sigma_{\bar\M}  -  \Sigma_{\bar\W} \|_F^2 + {\nu_{\{ d_i \}}} \left(\tr(\Sigma_{\bar\W})\right)^2} = \minim{\bar\sigma \in \R^r_+}{\sum_{i=1}^{r}{\left(\lambda_i(\bar\M) - \bar\sigma_i\right)^2}  + {\nu_{\{ d_i \}}} \left( \sum_{i=1}^{r}{\bar\sigma_i} \right)^2}
\end{align*}
The Lagrangian is given by
 \begin{align*}
 L(\bar\sigma,\alpha)&=\sum_{i=1}^{r}{\left(\lambda_i(\bar\M) - \bar\sigma_i\right)^2} + {\nu_{\{ d_i \}}} \left( \sum_{i=1}^{r}{\bar\sigma_i} \right)^2 - \sum_{i=1}^{r}{\alpha_i\bar\sigma_i}
 \end{align*}
  The KKT conditions ensures that at the optima it holds for all $i \in [r]$ that 
\begin{align*}
\bar\sigma_i \geq 0 , \ \alpha_i \geq 0 , \ \bar\sigma_i\alpha_i = 0 , \quad  2(\bar\sigma_i-\sigma_i(\bar\M)) + 2{\nu_{\{ d_i \}}} \sum_{j=1}^{r}{\bar\sigma_j} - \alpha_i =0
\end{align*}

Let $\rho = |{i: \bar\sigma_i > 0}|\leq r$ be the number of nonzero $\bar\sigma_i$, i.e. rank of the global optimum $\bar\W$. For $i \in [\rho]$, we have $\alpha_i = 0$. Therefore, we have that: 

\begin{align*}
\bar\sigma_i + {\nu_{\{ d_i \}}} \sum_{j=1}^{r}{\bar\sigma_j} = \sigma_i(\bar\M) &\implies (\I_\rho + {\nu_{\{ d_i \}}}\1_\rho\1_\rho^\top)\bar\sigma_{1:\rho} = \sigma_{1:\rho}(\bar\M) \\
&\implies \bar\sigma_{1:\rho} = (\I_\rho - \frac{{\nu_{\{ d_i \}}}}{1+\rho {\nu_{\{ d_i \}}}}\1_\rho\1_\rho^\top)\sigma_{1:\rho}(\bar\M)  = \sigma_{1:\rho}(\bar\M)-\frac{{\nu_{\{ d_i \}}} \rho \kappa_\rho}{1+\rho {\nu_{\{ d_i \}}}}\1_\rho
\end{align*}
where $ \kappa_j:=\frac1{j}\sum_{i=1}^{j}{\sigma_i(\bar\M)}$. The equation above tell us that  for $i\in[\rho]$, the singular values of $\bar\W$ are just a shrinkage of the singular values of $\bar\M$. In particular, it means that $\rho \leq \rank{\bar\M}$. Therefore, { without loss of generality, we assume that $r \leq \rank{\bar\M}$.} Also, since $\bar\sigma_i > 0$ for all $i\in [\rho]$, it holds that $\sigma_i(\bar\M) > \frac{{\nu_{\{ d_i \}}}\rho \kappa_\rho}{1+\rho{\nu_{\{ d_i \}}}}$ for all $i\in [\rho]$.
For $i \in \{ \rho+1,\ldots, r \}$, on the other hand, $\bar\sigma_i=0$ and we have
\begin{align*}
 \frac12 \alpha_i = \bar\sigma_i + {\nu_{\{ d_i \}}}\sum_{j=1}^{r}{\bar\sigma_j} - \sigma_i(\bar\M) =  0+ \frac{{\nu_{\{ d_i \}}}}{1+\rho{\nu_{\{ d_i \}}}}\sum_{j=1}^{\rho}{\sigma_j(\bar\M)} - \sigma_i(\bar\M) = - \sigma_i(\bar\M) + \frac{{\nu_{\{ d_i \}}} \rho \kappa_\rho}{1+ \rho {\nu_{\{ d_i \}}}},
\end{align*}
where we used the fact that $$\sum_{i=1}^{r}\bar\sigma_i=1_\rho^\top\bar\sigma_{1:\rho}=\sum_{i=1}^{\rho}\sigma_i(\bar\M) - \frac{{\nu_{\{ d_i \}}}\rho^2 \kappa_\rho }{1+ \rho {\nu_{\{ d_i \}}}}=(1-\frac{{\nu_{\{ d_i \}}}\rho}{1+\rho{\nu_{\{ d_i \}}}})\kappa_\rho=\frac{\rho \kappa_\rho}{1+\rho{\nu_{\{ d_i \}}}}.$$

 By dual feasibility, we conclude that $\sigma_i(\bar\M) \leq \frac{{\nu_{\{ d_i \}}} \rho \kappa_\rho}{1+ \rho {\nu_{\{ d_i \}}}}$ for all $i\in \{\rho+1,\ldots, r\}$, which completes the proof.
\end{proof}

\begin{proof}[Proof of Theorem~\ref{thm:main_1d}]
Consider $\W^*$, a global optimum of problem~\ref{eq:optim}. If all such global optima  can be implemented by equalized networks, then by Theorem~\ref{thm:envelope} it holds that $\Theta(\W^*)=\Theta^{**}(\W^*)=\nu_{\{ d_i \}}\| \W^*\C^\frac12 \|_*^2$. In this case, the lifted problem in Equation~\ref{eq:optim} boils down to the following convex problem
\begin{equation}\label{eq:convex}
\minim{\W\in\R^{d_{k+1}\times d_0}}{\bE[\| \y-\W\x \|^2] + \nu_{\{ d_i \}} \| \W\C^\frac12 \|_*^2}, \quad \rank{\W} \leq \min_{i\in[k+1]}d_i =: r.
\end{equation}
Proposition~\ref{prop:equalizable}, on the other hand, states that any rank-1 network map can be implemented by an equalized network. Therefore, { the key idea of the proof is to make sure that the global optimum of problem~\ref{eq:convex} has rank equal to one.} It suffices to notice that under the assumption $\sigma_1(\bar\M) - \sigma_2(\bar\M) \geq \frac{1}{{\nu_{\{ d_i \}}}}\sigma_2(\bar\M)$, it holds that  $\sigma_1(\bar\M) > \frac{{\nu_{\{ d_i \}}}\sigma_1(\bar\M)}{1+{\nu_{\{ d_i \}}}}$ and $\sigma_j(\bar\M) \leq \frac{{\nu_{\{ d_i \}}}\sigma_1(\bar\M)}{1+{\nu_{\{ d_i \}}}}$ for all $j > 1$. In this case, using Lemma~\ref{lem:global}, the solution $\cS_{\alpha_1}(\bar\M)\C^{-\frac12}$ has rank equal to one, which completes the proof.
\end{proof}

\end{document}